\documentclass[conference]{IEEEtran}
\usepackage{times}
\usepackage{capt-of}
\usepackage[numbers]{natbib}
\usepackage{amsmath}
\usepackage{wrapfig}
\usepackage{algorithm}
\usepackage{algorithmic}
\usepackage{graphicx}
\usepackage{fontawesome}
\usepackage{multicol}
\usepackage[table]{xcolor}
\usepackage{csquotes}

\usepackage{booktabs}
\usepackage{multirow}
\usepackage{amssymb}
\usepackage{tcolorbox}
\usepackage[pagebackref=true, breaklinks=true, colorlinks,
            citecolor=citecolor, linkcolor=linkcolor, bookmarks=false]{hyperref}
\definecolor{citecolor}{HTML}{7AA6B8}
\definecolor{linkcolor}{HTML}{5A8805}
\usepackage{xcolor}
\definecolor{harpcolor}{HTML}{BE5D3D}
\newcommand{\ENAP}{\textcolor{harpcolor}{\textbf{ENAP}}}
\usepackage[capitalize]{cleveref}
\newtheorem{definition}{Definition}[section]
\newtheorem{proposition}{Proposition}[section] 
\definecolor{phasered}{HTML}{BE5D3D}
\definecolor{commentblue}{HTML}{678AAD}
\newcommand{\AlgPhase}[1]{%
    \STATE \textbf{\textcolor{phasered}{#1}}
}
\definecolor{colorAC}{RGB}{204,0,0}   
\definecolor{colorm67F}{RGB}{0,102,204}  
\definecolor{colorpCqu}{RGB}{0,153,76}  
\definecolor{colorMLDa}{RGB}{153,0,153} 
\definecolor{colorTh4H}{RGB}{204,102,0}

\begin{document}

\title{Emergent Neural Automaton Policies: Learning Symbolic Structure from Visuomotor Trajectories}

\author{Yiyuan Pan$^{*,1}$, Xusheng Luo$^{*,1}$, Hanjiang Hu$^{1}$, Peiqi Yu$^{1}$, Changliu Liu$^{1}$\\
$^{*}$Equal contribution\\
Robotics Institute, Carnegie Mellon University, Pittsburgh, PA 15213, USA\\
\{yiyuanp, xushengl, hanjianh, peiqiy, cliu6\}@andrew.cmu.edu \\
$\quad$\\
$\quad$
}

\makeatletter
\let\@oldmaketitle\@maketitle
    \renewcommand{\@maketitle}{\@oldmaketitle
    \centering
    \includegraphics[width=1.0\textwidth]{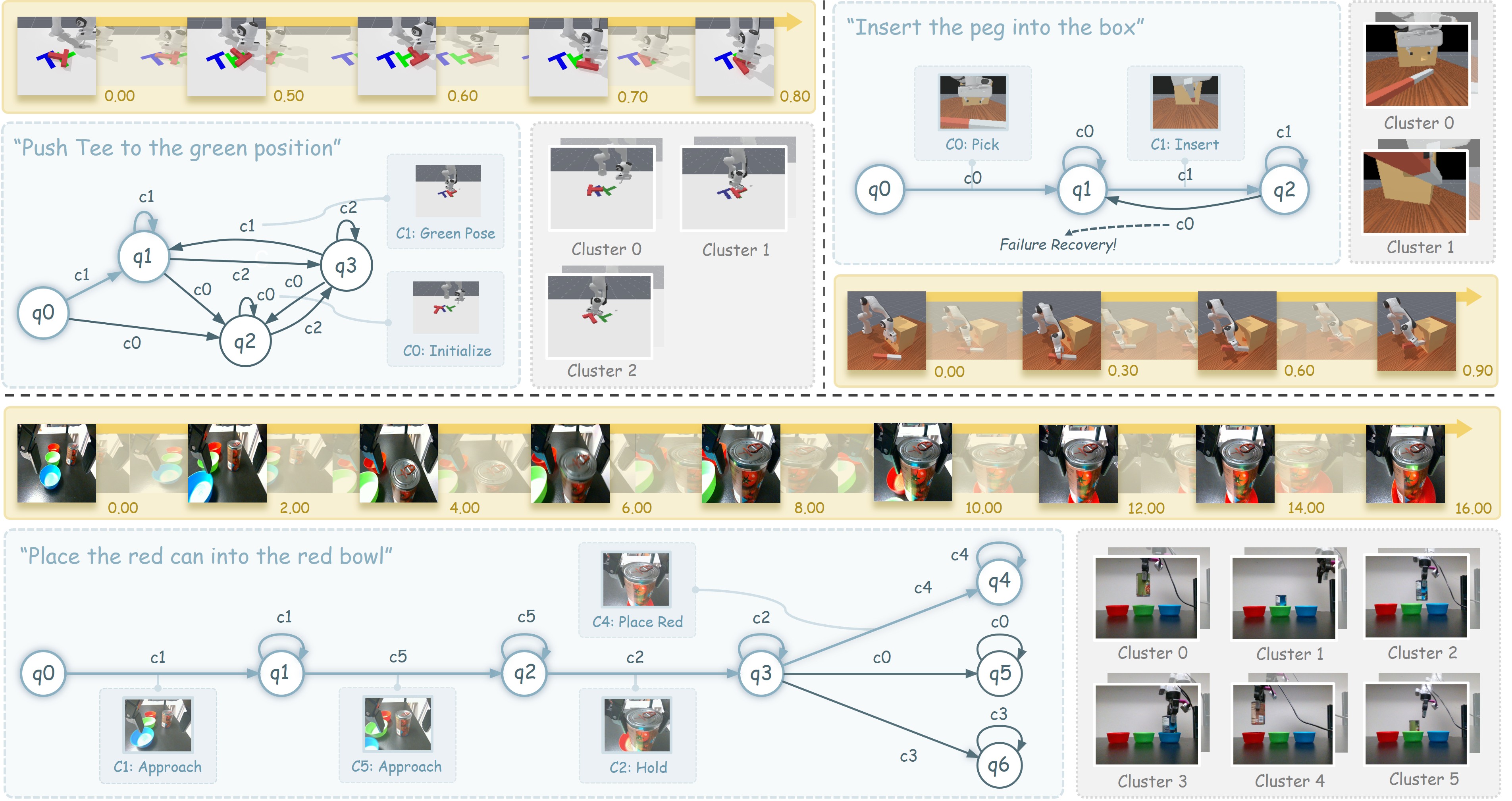}
    \captionof{figure}{
    \textbf{Unsupervised Discovery of Task Structures.} \ENAP\ successfully recovers task-specific logic, including cyclic dependencies (\textbf{Top-Left}), logical branching for multi-modal goals (\textbf{Bottom}), and crucially, \textit{autonomous failure recovery} (\textbf{Top-Right}, dashed line), where the automaton learns to transition back to a previous state ($q_2 \to q_1$) to retry insertion upon error. Semantic meanings (e.g., `\textit{Hold}') are generated by GPT based on cluster representatives. 
    }
    \vspace{-2pt} 
    \label{fig:firstpage}
    \setcounter{figure}{1}
  }
\makeatother
    
\maketitle

\begin{abstract}
Scaling robot learning to long-horizon tasks remains a formidable challenge. While end-to-end policies often lack the structural priors needed for effective long-term reasoning, traditional neuro-symbolic methods rely heavily on hand-crafted symbolic priors. To address the issue, we introduce \ENAP\ (\textcolor{harpcolor}{E}mergent \textcolor{harpcolor}{N}eural \textcolor{harpcolor}{A}utomaton \textcolor{harpcolor}{P}olicy), a framework that allows a bi-level neuro-symbolic policy adaptively emerge from visuomotor demonstrations. Specifically, we first employ adaptive clustering and an extension of the $L^*$ algorithm to infer a Mealy state machine from visuomotor data, which serves as an interpretable high-level planner capturing latent task modes. Then, this discrete structure guides a low-level reactive residual network to learn precise continuous control via behavior cloning (BC). By explicitly modeling the task structure with discrete transitions and continuous residuals, \ENAP\ achieves high sample efficiency and interpretability without requiring task-specific labels. Extensive experiments on complex manipulation and long-horizon tasks demonstrate that \ENAP\ outperforms state-of-the-art (SoTA) end-to-end VLA policies by up to 27\% in low-data regimes, while offering a structured representation of robotic intent (\cref{fig:firstpage}). \textbf{Webpage:} \url{https://intelligent-control-lab.github.io/ENAP-project-webpage}
\end{abstract}

\IEEEpeerreviewmaketitle



\section{Introduction}
Developing long-horizon execution capabilities is a critical step toward generalist robotic agents \cite{sermanet2024robovqa}. Such tasks inherently demand a synergy between high-level discrete planning and low-level continuous control, a challenge traditionally formulated within the realm of Task and Motion Planning (TAMP) \cite{guo2023recent,zhao2024survey}. While recent end-to-end learning approaches, such as Vision-Language-Action (VLA) models, have demonstrated efficacy in short-horizon tasks, scaling these methods to multi-stage and long-horizon problems remains challenging \cite{ma2024survey,kim2024openvla, pan2025seeing}. 

This limitation stems from a fundamental misalignment with the cognitive structure of intelligent decision-making. In psychology, decision-making processes in both humans and ideal robotic agents are naturally hierarchical: \textit{System 2} governs symbolic reasoning and long-term planning, while \textit{System 1} handles intuitive reactivity and sensorimotor control \cite{badcock2019hierarchically, macenski2022robot}. This hierarchical architecture not only facilitates learning by introducing structural priors but also improves interpretability compared to unstructured baselines. In contrast, end-to-end learning attempts to approximate the non-convex optimization landscape of long-horizon tasks using a single, monolithic network, resulting in a black-box policy that is semantically opaque and highly data-intensive \cite{kim2024openvla, pan2025planning,pan2025wonder}. Conversely, while existing neuro-symbolic TAMP frameworks successfully leverage bi-level planning for both performance and interpretability enhancement, they often rely on rigid, hand-crafted symbolic priors (e.g., temporal logic predicates) \cite{silver2022learning, silver2024neuro}, failing to adaptively reveal the intrinsic structural manifold from data.

Therefore, this work seeks to answer a pivotal question:

\begin{tcolorbox}[top=1pt, bottom=1pt, left=0pt, right=0pt]
\begin{center}
\textit{How can the logical structure of System 2 naturally emerge from the perceptual flow of System 1?}
\end{center}
\end{tcolorbox}

Hybrid system theory offers a profound insight: complex dynamics can be decomposed into sequences of continuous local behaviors governed by discrete modes \cite{lunze2009handbook, poli2021neural}. Inspired by this, we argue that one appropriate policy representation for long-horizon tasks is neither monolithic network policies nor rigid symbolic policies, but a ``Neural Automaton Policies". In this representation, a state machine encapsulates the global task phases (System 2), while continuous network functions model the local control within each phase (System 1). More importantly, this hybrid structure acts as a policy representation that is both kinematically feasible for robots and semantically interpretable for humans \cite{molnar2020interpretable}.

To this end, we introduce  \ENAP\ (\textcolor{harpcolor}{\textbf{E}}mergent \textcolor{harpcolor}{\textbf{N}}eural \textcolor{harpcolor}{\textbf{A}}utomaton \textcolor{harpcolor}{\textbf{P}}olicy), a framework that unsupervisedly reconstructs a task-level state machine planner from demonstrations (\cref{fig:method_overview}). \ENAP\ seamlessly integrates this discrete topological planner with a downstream residual compensation network within a unified framework. This bi-level architecture unifies discrete reasoning (where humans excel) and continuous control (where robots excel), aiming for both high interpretability and strong compositionality. Specifically, we first employ adaptive clustering and an extended $L^*$ algorithm to reconstruct the state machine from trajectory sets. Then, we combine this automaton-based planner with a residual network, learning the full policy via behavior cloning. Empirical results demonstrate that \ENAP\ achieves at least an 8\% absolute improvement over VLA models while using significantly 39\% fewer parameters and offering superior interpretability.

In summary, our contributions are threefold:
\begin{itemize}
    \item We propose \ENAP, which learns a structured task abstraction and low-level reactive network in an emergent manner, offering a viable path for evolving from end-to-end reasoning to hierarchical cognitive architectures.
    \item We design a label-free neuro-symbolic framework that eliminates the reliance on expert priors. We incentivize structure discovery through an adaptive symbolic abstraction, improving both interpretability and compositionality.
    \item We demonstrate that \ENAP\ consistently outperforms VLA models by at least 8\% in low-data settings, and provide a rigorous Partially Observable Markov Decision Process-based theoretical justification for the framework.
\end{itemize}

\section{Related Works}
\subsection{Neuro-Symbolic Learning for Robotics}
Neuro-symbolic approaches in robotics aim to coordinate the generalization capabilities of symbolic reasoning with the perceptual flexibility of neural networks, particularly within the domain of Task and Motion Planning (TAMP). While foundational frameworks like PDDLStream \cite{garrett2020pddlstream} successfully integrated discrete search with continuous motion planning, they relied on manually specified symbolic models. To overcome this, recent research has pivoted toward learning abstractions directly from data. In the realm of action abstraction, approaches have evolved from discovering temporal options via skill chaining \cite{konidaris2009skill} to learning deep parameterized skills \cite{pertsch2021accelerating,da2012learning} that enable continuous control within symbolic plans. Complementary work in state abstraction seeks to ground continuous sensorimotor data into discrete symbols \cite{konidaris2018skills}, utilizing techniques ranging from deep generative models \cite{asai2018classical} to neuro-symbolic concept learners \cite{mao2019neuro} to construct latent states compatible with symbolic planners. Building upon these learned abstractions, model learning focuses on inferring the transition dynamics that govern symbolic interactions. This area has progressed from learning probabilistic rules in stochastic domains \cite{pasula2007learning} to employing Graph Neural Networks (GNNs) for modeling relational dynamics \cite{chitnis2022learning} and hybrid automata for stochastic transitions \cite{poli2021neural}. Finally, to enable execution across varying task configurations, Generalized planning methods train neural policies to approximate symbolic plans \cite{groshev2018learning}, often utilizing GNNs to reason about object importance in large-scale environments \cite{silver2021planning} or to select appropriate planners \cite{ma2020online}.

While prior neuro-symbolic methods successfully integrate learning and planning, they typically require pre-defined symbolic predicates. In contrast, \ENAP\ introduces a label-free, self-supervised framework that autonomously discovers interpretable automata directly from raw visuomotor data, bridging the gap between low-level sensorimotor control and high-level symbolic reasoning without manual engineering.

\subsection{Symbolic Structure Discovery}
Discovering symbolic structures from continuous data is pivotal for enhancing interpretability and planning efficiency in robotics. A broad spectrum of methods has been proposed to infer such structures, ranging from learning PDDL operators and predicates for bilevel planning \cite{li2025bilevel,lorang2025few} to discovering symbolic graphs and rules. Among these, ``learning from demonstration" is a significant research topic \cite{krishnan2017transition,44shah2023supervised,wang2022temporal}, ranging from learning grounded symbolic representations for planning \cite{11konidaris2018skills} to inferring temporal logic formulas via maximum entropy \cite{22vazquez2018learning}, or joint learning of logical structure and atomic propositions \cite{33chou2020explaining}. 

While these approaches successfully capture high-level task logic, Finite State Machines (FSMs) offer a unique advantage: they provide a compact, topological representation of sequential dynamics that supports formal verification and reactive control. Specifically, Mealy machines have been shown to offer richer structural representations than Moore machines, making them particularly effective for POMDP controllers \cite{amato2010finite}.

Driven by these advantages, a prominent line of research focuses on automata extraction from data. Building on classical approaches like the $L^*$ algorithm and the Myhill-Nerode theorem \cite{dhayalkar2025neural}, recent works have successfully extracted Deterministic Finite Automata (DFA) from RNNs to verify formal properties \cite{danesh2006re,koul2018learning} and from Transformers to reveal their operational mechanisms \cite{zhang2024automata}. To address stochasticity, extensions such as Weighted Finite Automata (WFA) extraction have been proposed for natural language processing \cite{wei2022extracting}. In the robotics domain, latent automata models like LATMOS \cite{zhan2025latmos} and serialized state machines \cite{mu2025look} have been developed to factorize long-horizon tasks into executable subgoals. Similarly, learning probabilistic automata (PDFA) has proven effective for capturing temporal dependencies and expert preferences directly from demonstrations \cite{baert2025learning}.

While existing methods excel in specific niches, they typically require either discretized inputs or predefined knowledge like cluster counts. \ENAP\ bridges these gaps by combining self-supervised continuous embedding with adaptive automata inference, enabling the emergent discovery of probabilistic, task-relevant structures without manual specification.

\section{Background}
\subsection{Problem Setup}
We formulate the robotic task as a Partially Observable Markov Decision Process (POMDP) \cite{cassandra1998survey}, defined by the tuple $\mathcal{P} = (S, \mathcal{O}, A, \mathcal{T}, \Omega, R, \gamma)$. Here, $S$ represents the state space of the environment, $\mathcal{O}$ denotes the high-dimensional observation space (e.g., RGB-D images, proprioception), and $A \subseteq \mathbb{R}^d$ is the continuous action space. The transition dynamics are given by $\mathcal{T}(s'|s,a)$, and the observation model by $\Omega(o|s,a)$.



We operate in the Learning from Demonstration (LfD) setting. We assume access to a dataset of expert trajectories $\mathcal{D} = \{\tau_i\}_{i=1}^N$, collected from expert demonstrations or rollouts of a policy. Each trajectory is a sequence $\tau_i = (o_0, a_0, \dots, o_{T_i}, a_{T_i})$, where the observation at time $t$ is explicitly defined as a tuple $o_t := (I_t, p_t)$, comprising high-dimensional visual input $I_t$ (e.g., RGB images) and proprioceptive state $p_t$ (e.g., joint pose). The objective is to learn a structured policy that mimics the expert's behavior.

\subsection{Preliminaries}
\subsubsection{\textbf{Mealy Machine}}
Finite State Machines (FSMs) are mathematical models of computation used to design logic for sequential systems. In the context of robotic planning and control \cite{kress2018synthesis,luo2025simultaneous}, FSMs' nodes represent discrete operational modes and edges define transitions triggered by perceptual events. The two primary machines are \textit{Moore machines} \cite{giantamidis2021learning}, where outputs are determined solely by the current state, and \textit{Mealy machines}, where outputs depend on both the current state and the current input \cite{shahbaz2009inferring}. 

In this work, we adopt the \textit{Probabilistic Mealy Machine} (PMM) as our structural backbone. While Moore machines are intuitive, Mealy machines offer greater reactivity and compactness \cite{li2006equivalence}. Moreover, this structure mirrors the fundamental dynamics of a POMDP, where the agent's action and state evolution are jointly determined by the current latent state and observation \cite{amato2010finite}. Formally, a PMM is defined as follows:

\vspace{5pt}

\begin{definition}[Probabilistic Mealy Machine]
\label{def:PMM}
The Probabilistic Mealy Machine is $\mathcal{M} = (Q, \Sigma, \Gamma, \delta, \lambda, q_0)$, where:
\begin{itemize}
    \item $Q$ is a finite set of states.
    \item $\Sigma$ is a finite input alphabet.
    \item $\Gamma$ is a finite output alphabet.
    \item $\delta(q' | q, \sigma)$, the probabilistic transition function representing the probability of moving to state $q'$ given current state $q$ and input $\sigma$;
    \item $\lambda(\gamma | q, \sigma)$, the probabilistic output function representing the probability of emitting output $\gamma\in \Gamma$ given $q$ and $\sigma$;
    \item $q_0 \in Q$ is the initial state.
\end{itemize}
\end{definition}

\vspace{5pt}

To bridge this automata-based definition with the POMDP formulation ($\mathcal{P}$) in our paper, we establish the following conceptual mapping: alphabet $\Sigma$ corresponds to a discretization of the continuous observation space $\mathcal{O}$ (see \cref{fig:firstpage} for example); nodes $Q$ serve as discrete phases inferred from the POMDP history; output alphabet $\Gamma$ represents the action space $\mathcal{A}$. Crucially, the transition function $\delta(q'|q, \sigma)$ models the high-level task progression (transitioning between phases), while the output function $\lambda(\gamma|q, \sigma)$ acts as a state-conditioned policy.


\begin{table}[t]
\centering
\caption{Comparison of Symbolic Structure Discovery Approaches.}
\vspace{-5pt}
\label{tab:comparison_methods}
\resizebox{\linewidth}{!}{%
\begin{tabular}{lcccc}
\hline
\textbf{Method}\rule{0pt}{7pt} & \textbf{Continuous} & \textbf{Probabilistic} & \textbf{Label-Free} \\
\hline
\multicolumn{4}{l}{\cellcolor{gray!10}\textit{- PDDL / Rule-Based}\rule{0pt}{7pt}} \\
Bilevel Learning [6] & $\checkmark$ &   &   \\
Neuro-Symbolic IL [10] & $\checkmark$ &   &   \\
\multicolumn{4}{l}{\cellcolor{gray!10}\textit{- Automata Extraction (NLP/Theory)}\rule{0pt}{7pt}} \\
RNN Extraction [2,3] &   &   & $\checkmark$ \\
WFA Extraction [5] &   & $\checkmark$ & $\checkmark$ \\
Transformer Extr. [4] &   &   & $\checkmark$ \\
\multicolumn{4}{l}{\cellcolor{gray!10}\textit{- Robotic Automata Learning}\rule{0pt}{7pt}} \\
PDFA Learning [7] & $\checkmark$ & $\checkmark$ &   \\
LATMOS [8] & $\checkmark$ &   &   \\
Serialized FSM [9] & $\checkmark$ &   &   \\
\hline
{\ENAP\ (Ours)}\rule{0pt}{8pt} & \textbf{$\checkmark$} & \textbf{$\checkmark$} & \textbf{$\checkmark$} \\
\hline
\end{tabular}%
}
\end{table}

\subsubsection{\textbf{Automata learning and L$^*$ algorithm}}

\begin{figure*}[t]
    \centering
    \includegraphics[width=\linewidth]{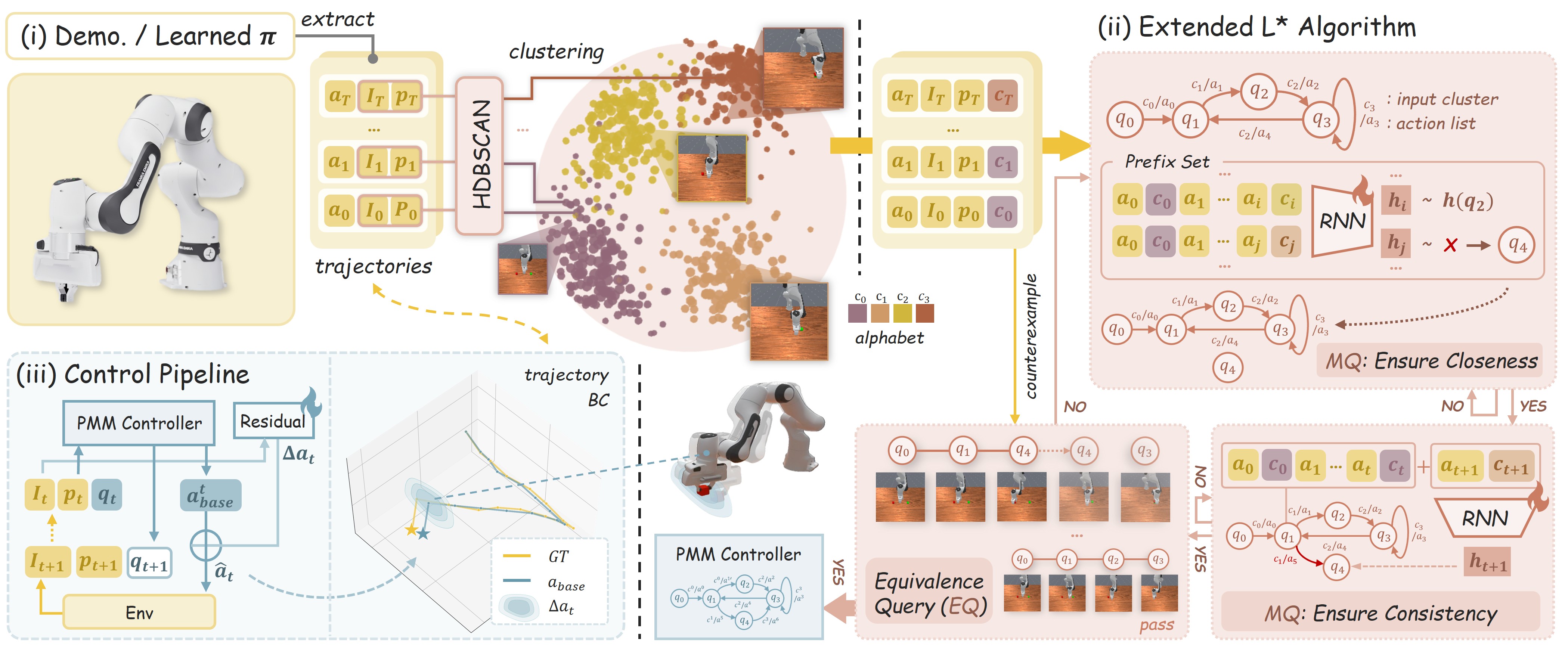} 
    \caption{\textbf{Overview of the \ENAP\ Framework.} 
    Our approach unifies structure discovery and hierarchical control through a three-stage pipeline:
    \textbf{(i) Symbol Abstraction:} Continuous trajectories from demonstrations are encoded and clustered via HDBSCAN \cite{mcinnes2017hdbscan} to discover a discrete alphabet $\Sigma$, mapping sensorimotor streams to symbolic sequences.
    \textbf{(ii) Structure Extraction via Extended L$^*$:} We introduce an extended $L^*$ algorithm that iteratively constructs a PMM by querying the dataset. The process maintains an observation table and enforces closedness (\textit{expanding nodes}) and consistency (\textit{refining edge}) constraints using an RNN-based history encoder.
    \textbf{(iii) Bi-level Control Pipeline:} The learned PMM serves as a high-level planner, providing a coarse action prior $a_{\text{base}}^t$. This is combined with a learned residual term $\Delta a_t\sim\pi_{\psi}(\cdot|q_t,o_t)$ to synthesize the final precise action $\hat{a}_t = a_{\text{base}}^t + \Delta a_t$.}
    \label{fig:method_overview}
\end{figure*}

\textit{Automata learning} aims to infer an FSM that explains a given set of observed sequences. Traditional algorithm is Angluin's $L^*$ \textit{Algorithm} \cite{angluin1987learning}. Its core principle rests on the Myhill-Nerode theorem  \cite{fellows1989analogue}, which provides a rigorous definition of ``node" (state) in FSMs: two history sequences are equivalent (i.e., belong to the same node) if and only if they yield identical future behaviors for all possible subsequent inputs.  $L^*$ leverages this to actively query the system to cluster history sequences into discrete states. 

Specifically, $L^*$ algorithm maintains an \textit{Observation Table} $(\mathcal{U}, \mathcal{E}, T)$ to infer the automaton. Let $\Sigma^*:=\{ \sigma_1 \cdots \sigma_k \mid k \in \mathbb{N}, \sigma_i \in \Sigma \}$ denote the set of finite input sequences, and $\mathcal{U} \cdot \Sigma = \{ u \cdot \sigma \mid u \in \mathcal{U}, \sigma \in \Sigma \}$ denote concatenation operation. Then, $\mathcal{U} \subset \Sigma^*$ is a set of \textit{Prefixes}, which can be viewed as candidate history sequences leading to distinct states; $\mathcal{E} \subset  \Sigma^*$ is a set of \textit{Suffixes}, representing future test sequences used to distinguish between two prefix sequences $u_i,u_j\in\mathcal{U}$. The table entries $T: \mathcal{U}\cdot \mathcal{E} \rightarrow \Gamma^*$ record the queried system's output (``what happens if we execute history $u \in \mathcal{U}$ followed by test $e \in \mathcal{E}$") for a prefix followed by a suffix. The learning proceeds via interactions with a Minimally Adequate Teacher (MAT) \cite{clark2010distributional}: 
\begin{enumerate}
    \item \textbf{Membership Query (MQ)}: the learner asks for the system output of a specific sequence $u \cdot e$, filling the table entries in $T$.
    \item \textbf{Equivalence Query (EQ)}: The learner proposes a hypothesis automaton based on the current table. The teacher either confirms it or provides a \textit{counter-example} trajectory that reveals an undiscovered behavior.
\end{enumerate}

With queries, the algorithm can ensure two key properties of the table: \textit{Closedness} and \textit{Consistency}. \textit{Closedness} ensures that for every discovered state, its transition to any next state is also covered by the known set $\mathcal{U}$. \textit{Consistency} ensures that if two histories appear identical under current tests $\mathcal{E}$, they must remain identical after any one-step extension. If either property is violated, $L^*$ expands $\mathcal{U}$ or $\mathcal{E}$ accordingly. We provide an illustrative example of $L^*$ Algorithm in Appendix~\ref{app:L}.

However, applying classical $L^*$ directly to robotic learning presents a significant gap: standard $L^*$ assumes discrete symbolic inputs and access to an omniscient oracle. Moreover, it is originally designed for deterministic automata. Therefore, we extend the $L^*$ algorithm to enable learning of PMM.

\section{Methodology}
\noindent\textbf{Method Overview. }
Given the demonstration dataset $\mathcal{D}$, our objective is to learn a structured policy $\pi_{\text{ENAP}}$ comprising a high-level PMM controller $\mathcal{M}$ and a low-level and compensatory residual network. \textit{Firstly}, we perform symbol abstraction to discretize the continuous observation space $\mathcal{O}$ into an alphabet $\Sigma$, and compress sequences of symbols into task phases (nodes) using a Recurrent Neural Network (RNN). \textit{Second}, we introduce an extension of the $L^*$ algorithm for continuous trajectory data. \textit{Finally}, we train a residual policy network $\psi$ via behavior cloning to ensure precise control, which takes the coarse action prior $\lambda(\cdot|q, \sigma)$ from the learned PMM and predicts a continuous action. (\cref{fig:method_overview}).

In what follows, we try to answer two questions: (i) \textit{How to unsupervisedly construct an interpretable state machine from noisy demonstrations? (\cref{sec:grounding,sec:extended_lstar})} and (ii) \textit{How to leverage such learned structure to guide the learning of hierarchical control policy? (\cref{sec:PMM_controller,sec:residual_learning})}

\subsection{Adaptive Symbol Abstraction}
\label{sec:grounding}
In the \ENAP\ framework, we follow \cref{def:PMM} and instantiate the PMM. Specifically, the node set $Q$ represents the discrete latent task modes (e.g., \textit{Reach}, \textit{Grasp}, \textit{Align}), while the edges encode the transitions between these nodes. 

Given the continuous trajectory data, the first step of \ENAP\ is to map observations $o_t := (I_t, p_t)$ to a semantic feature vector $z_t = \phi_{\theta}(o_t) \in \mathbb{R}^z$ via an encoder. Following this, we employ HDBSCAN~\cite{mcinnes2017hdbscan}, an adaptive clustering algorithm, to unsupervisedly discover the alphabet $\Sigma$ within the feature space. Each cluster leads to one symbol. This step augments the trajectory data at each time step from $(o_t, a_t)$ to a tuple $(o_t, a_t, c_t)$, where $c_t = {\rm HDBSCAN}(z_t) \in \Sigma$ is the assigned cluster label. The choice of encoder depends on whether the trajectory data are obtained from demonstrations or from rollouts of the learned policy.
\begin{enumerate}
    \item \textbf{Scenario 1 (Policy Enhancement)}: The trajectory data are obtained from policy rollouts. In this setting, our objective is to extract a PMM to enhance interpretability or to construct a structured policy with improved performance. The encoder learned within the policy is directly reused.
    \item \textbf{Scenario 2 (Policy Discovery)}: The trajectory data are obtained from demonstrations. Our objective is to learn a policy from scratch using the provided demonstrations (e.g., teleoperation data). In this setting, we fine-tune a raw visual encoder (e.g., DINO \cite{ougab2023dinov2}) using our proposed algorithm (see \cref{alg:iterative_learning}).
\end{enumerate}

Given the clustered symbols, standard $L^*$ algorithms rely on exact matching of sequences to distinguish nodes. In robotics, a sequence refers to a trajectory of historical actions and symbols $(a_{0:t}, c_{0:t})$. To handle the variable-length nature of robotic trajectories where different temporal spans may map to the same task phase, we train an RNN to map the history $(a_{0:t}, c_{0:t})$ to a fixed-size embedding $h_t \in \mathcal{H}$.
Intuitively, $h_t$ serves as a continuous surrogate for discrete state nodes, summarizing variable-length past experiences into a consistent Markovian representation of the current task phase.
(the number of $h$ equals the total number of time steps across all trajectories). Specifically, we attach auxiliary prediction heads to the RNN during training and discard them afterward in order to optimize a multi-objective loss:
\begin{equation}
    \mathcal{L}_{\text{RNN}} := \mathcal{L}_{\text{pred}} + \lambda\mathcal{L}_{\text{contrast}} = (\mathcal{L}_{\text{act}} + \mathcal{L}_{\text{state}}) + \lambda \mathcal{L}_{\text{contrast}}
\end{equation}
where the loss terms $\mathcal{L}_{\text{act}}$ (MSE) and $\mathcal{L}_{\text{state}}$ (cross-entropy) are employed to predict the subsequent action $a_{t+1}$ and symbolic state $c_{t+1}$, respectively. Phase-aware contrastive loss $\mathcal{L}_{\text{contrast}}$ to separate latent representations across different phases: it minimizes the cosine distance between $h_t$ and $h_{t+1}$ during periods of constant cluster assignment (self-loops, $c_t = c_{t+1}$) while maximizing it during phase transitions ($c_t \neq c_{t+1}$). 

This architectural choice of RNN is theoretically motivated: the trained RNN induces clustering of continuous hidden states around saturated centers, in which each component is driven toward $\pm 1$ under tanh activations. As a result, cosine similarity becomes a reliable metric for distinguishing automaton states (see Appendix~\ref{app:rnn_theory} for the formal proposition).




\subsection{Structure Extraction via Extended $L^*$ Algorithm}
\label{sec:extended_lstar}
To bridge classical $L^*$ with continuous robot learning, we translate its discrete logical properties into spatial geometric constraints. Intuitively, \textit{closedness} (traditionally requiring a new string to match a known state) now means any newly observed history embedding must fall within the $\tau_{\text{sim}}$-neighborhood of an existing centroid. Similarly, \textit{consistency} (requiring identical states to yield identical futures) is relaxed to ensure that trajectories within the same current node, given the same input $c$, consistently transition into the same destination region.

Leveraging the abstracted alphabet $\Sigma$ and the phase embeddings $\mathcal{H}$, we propose an extension of the $L^*$ algorithm. Due to the absence of an oracle, our algorithm queries the static demonstration database $\mathcal{D}$ to infer system behavior. We formulate this as an iterative process, driven by two redefined query mechanisms: a \textit{Self-Supervised Membership Query} (MQ) that is used to construct a hypothetical state machine, and a \textit{Non-deterministic Equivalence Query} (EQ) that validates the hypothesis (\cref{alg:harp_discovery}).

\begin{algorithm}[tb]
   \caption{Extended $L^*$ Algorithm}
   \label{alg:harp_discovery}
\begin{algorithmic}[1]
   \STATE {\bfseries Input:} Demonstration $\mathcal{D}$, RNN $h_\kappa$, Thresholds $\tau_{\text{sim}}, \epsilon_{\text{err}}$
   \STATE {\bfseries Output:} Probabilistic Mealy Machine $\mathcal{M}_{\text{final}}$
   \STATE Construct Database $\mathcal{H}$.
   \STATE Initialize prefix set $\mathcal{U} \leftarrow \{ h_0^1 \}$.
   
   \WHILE{Hypothesis doesn't pass \textit{EQ}}
      \AlgPhase{\textit{Phase 1}: Self-Supervised MQ}
      \REPEAT
         \STATE $\text{\textit{Closed}} \leftarrow \text{True}$
         \STATE \textit{\textcolor{commentblue}{// Check if all transitions land in known states}}
         \STATE Construct transition destinations $\{h_{\text{next}} \}$ reachable from $\mathcal{U}$ via database $\mathcal{H}$.
         \IF{$\max_{u' \in \mathcal{U}} \text{sim}(h_{\text{next}}, u') < \tau_{\text{sim}}$}
             \STATE $\mathcal{U} \leftarrow \mathcal{U} \cup \{ h_{\text{next}} \}$; ${\rm \textit{Closed}} \leftarrow \text{False}$
         \ENDIF
      \UNTIL{\text{\textit{Closed}} is \text{True}}
      \STATE Construct hypothesis $\mathcal{M}$ from $\mathcal{H}$ supported by $\mathcal{U}$.

      \vspace{4pt}

      \AlgPhase{\textit{Phase 2}: Non-deterministic EQ}
      \FORALL{test trajectory $\tau \in \mathcal{D}_{\text{test}}$}
         \STATE Track all paths in $\mathcal{M}$ consistent with $\tau$'s symbols.
         \STATE \textit{\textcolor{commentblue}{// Validity requires topology and action match}}
         \IF{no valid path survives at step $t$}
             \STATE $\mathcal{U} \leftarrow \mathcal{U} \cup \{ h(\tau{[:j]}), \forall j\leq t \}$; \textbf{break} \textit{\textcolor{commentblue}{// Add Counterexample}}
         \ENDIF
      \ENDFOR
      
   \ENDWHILE

   \vspace{4pt}
   
   \AlgPhase{\textit{Phase 3}: Stable Phase Pruning}
   \STATE $\mathcal{M}_{\text{final}} \leftarrow \textbf{Prune}(\mathcal{M})$ 

   \vspace{4pt}
   
   \STATE \textbf{return} $\mathcal{M}_{\text{final}}$
\end{algorithmic}
\end{algorithm}
\begin{algorithm}[tb]
   \caption{Iterative Residual Learning}
   \label{alg:iterative_learning}
\begin{algorithmic}[1]
   \STATE {\bfseries Input:} Dataset $\mathcal{D}$, Initial Encoder $\phi_{\theta^{(0)}}$, Iterations $K$
   \STATE {\bfseries Output:} Trained Policy $\pi_{\text{ENAP}} = (\mathcal{M}, \pi_{\psi}, \phi_{\theta})$
   
   \FOR{$k = 0$ \TO $K-1$}
      \AlgPhase{E-Step: Structure Extraction}
      \STATE Extract features $\phi_{\theta^{(k)}}(o_t)$ for all $\tau \in \mathcal{D}$.
      \STATE Discretize features to $\Sigma^{(k)}$ via HDBSCAN.
      \STATE Train RNN on $(\Sigma^{(k)}, \mathcal{D})$ to get $h^k_{\kappa}$.
      \STATE $\mathcal{M}^{(k)} \leftarrow \textbf{Extended $L^*$}(\mathcal{D}, h^k_{\kappa}, \dots)$ \textit{\textcolor{commentblue}{// See \cref{alg:harp_discovery}}}
      
      \vspace{4pt}

      \AlgPhase{M-Step: Policy Optimization}
      \STATE Freeze PMM $\mathcal{M}^{(k)}$, initialize Residual Net $\pi_{\psi^{(k)}}$.
      \REPEAT
         \STATE Sample batch $(o_t, a_t, q_t)$ from $\mathcal{D}$.
         \STATE Compute loss $\mathcal{J} \leftarrow$ \cref{loss}.
         \STATE Update $\theta^{(k)}, \psi^{(k)}$ via gradient descent.
      \UNTIL{Convergence}
   \ENDFOR
   
   \STATE \textbf{return} $(\mathcal{M}^{(K)}, \pi_{\psi^{(K)}}, \phi_{\theta^{(K)}})$
\end{algorithmic}
\end{algorithm}

\subsubsection{\textbf{Self-Supervised MQ} (\textcolor{harpcolor}{\textbf{\textit{Phase 1}}})}

In our framework, the prefix set $\mathcal{U}$ serves as the collection of representative history embeddings, where each $u \in \mathcal{U}$ is a centroid summarizing a cluster of similar trajectory embeddings $h_t$ that correspond to a unique latent task node $q$. Two embeddings $h^i_t$ and $h^j_{t'}$ from $i$-th and $j$-th trajectories at different time steps may correspond to the same $u \in \mathcal{U}$, that is, task phase. The core challenge is to identify the PMM from the non-Oracle dataset $\mathcal{D}$. To this end, we define the Generalized MQ function $\text{MQ}_{\mathcal{D}}(u)$, which retrieves the immediate action $a_t$ and next-step embeddings $h_{t+1}$ from all trajectory segments in $\mathcal{D}$ that are $\tau_{sim}$-cosine similar to the representative history $u$, thereby obtaining transitions for the candidate state,
\begin{equation}
\begin{aligned}
\text{MQ}_{\mathcal{D}}(u)
&= \{ (a_t^i, h^i_{t+1}) \mid \\
&\quad \exists \ t \text{ s.t. }
\cos\!\big(h(\tau_i[:t]), u\big) \geq \tau_{\text{sim}} \}.
\end{aligned}
\end{equation}

We initialize $\mathcal{U}$ using the initial state of the first trajectory at $t = 0$, that is, $\mathcal{U} =\{h_0^1\}$ and use $\text{MQ}_{\mathcal{D}}(u)$ to expand $\mathcal{U}$. The growth of $\mathcal{U}$ continues until the hypothesis PMM satisfies both \textit{closedness} (every reachable next-step embedding is similar to some $u\in\mathcal{U}$), at which point the construction early-stops. Specifically, for every prefix $u$ in current $\mathcal{U}$, we query $\text{MQ}_{\mathcal{D}}(u)$ to obtain all reachable next-step embeddings $\{h_{\text{next}}\}$. If a reachable $h_{\text{next}}$ satisfies $\max_{u' \in \mathcal{U}} \cos(h_{\text{next}}, u') < \tau_{\text{sim}}$, which means the $h_{\text{next}}$ is dissimilar to every recorded history, we add the $h_{\text{next}}$ into $\mathcal{U}$ as a new centroid, expanding the automaton's nodes until closedness is satisfied. Once $\mathcal{U}$ is closed, each $u \in \mathcal{U}$ corresponds to a node $q$. To this end, we derive the finite set of states $Q$ of the PMM. 

Given constructed \textit{closed} $Q$, the next step is to construct the set of outgoing transitions that satisfy \textit{consistency}. Specifically, in the sequel, we discuss how we build edge from node $q$ under input symbol $c\in\Sigma$, and associate them with two critical pieces of information: (1) An action prior distribution $\lambda(\cdot|q, c)$, represented practically as the empirical mean $a_{\rm base}$ of all continuous action vectors observed transitioning along this edge; and (2) A next-input set ${\rm NIS}(q) \subset \Sigma$, which records all the valid input symbols accepted by node $q$. 


Specifically, for a discrete state $q$ (represented by a centroid $u$) and an input symbol $c$, the generalized query scans the demonstration dataset to retrieve all continuous next-step embeddings $\{h_{t+1}\}$ from time steps $t$ where the history is similar to $u$ and the input is $c$. We then map each continuous $h_{\text{next}}$ to a discrete destination state $q'$ by finding its nearest representative $u' \in \mathcal{U}$. Next, we estimate the discrete transition probability $\delta(q'|q, c)$ based on the normalized frequency of reaching each $q'$. Finally, the continuous action prior $\lambda(\cdot|q, c)$ is modeled as the empirical distribution (the mean value mentioned earlier) of all continuous actions $a_t$ executed during these specific $q \xrightarrow{c} q'$ transitions. In our cases, the \textit{consistency} property is satisfied by default, as the different instances $h$ within the same node will transit to the same next node under the same input symbol.



In classical $L^*$, the algorithm constructs a complete transition table by actively querying an oracle for the outcome of every state prefix $u$ extended by every possible input symbol $\sigma \in \Sigma$. However, in our offline imitation learning setting, we lack an interactive oracle, and the static demonstration dataset naturally suffers from transition sparsity, meaning many $(u, \sigma)$ combinations are physically unobserved. To overcome this limitation, our Generalized MQ redefines the query process: instead of exhaustively testing all hypothetical extensions, it acts as a data-mining operation, retrieving only the empirically observed next-step embeddings $h_{\text{next}}$ that actually follow the current state in the dataset.

\subsubsection{\textbf{Non-deterministic EQ} (\textcolor{harpcolor}{\textbf{\textit{Phase 2}}})}
For robotic tasks, we relax the deterministic verification to a non-deterministic path search. Given a test trajectory $\tau = (o_0, a_0, \dots)$, we track all possible transition sequences in the PMM induced by its cluster-index sequence. If there exists at least one valid transition sequence $\tau^{\rm PMM} = (c_0, a^0_{\rm base}, \dots)$ (i.e., at every time step the test action $a_t$ lies close to the action mean $a^t_{\rm base}$ of the edge under a given tolerance threshold), then the EQ is passed. Otherwise, the shortest prefix of the trajectory $\tau$ that causes the failure is returned as a counterexample and added to $\mathcal{U}$, forcing the learner to split states or introduce new transitions to resolve the ambiguity.

\subsubsection{\textbf{Stable Phase Pruning} (\textcolor{harpcolor}{\textbf{\textit{Phase 3}}})}
To recover a compact structure from the initial temporal expansion, we merge a destination node $q'$ into its source node $q$ when the transition from $q$ to $q'$ is triggered by $c$ and $q$ contains a self-loop on $c$, which mirrors a stable phase. 

\begin{figure}[t]
    \centering
    \includegraphics[width=0.7\linewidth]{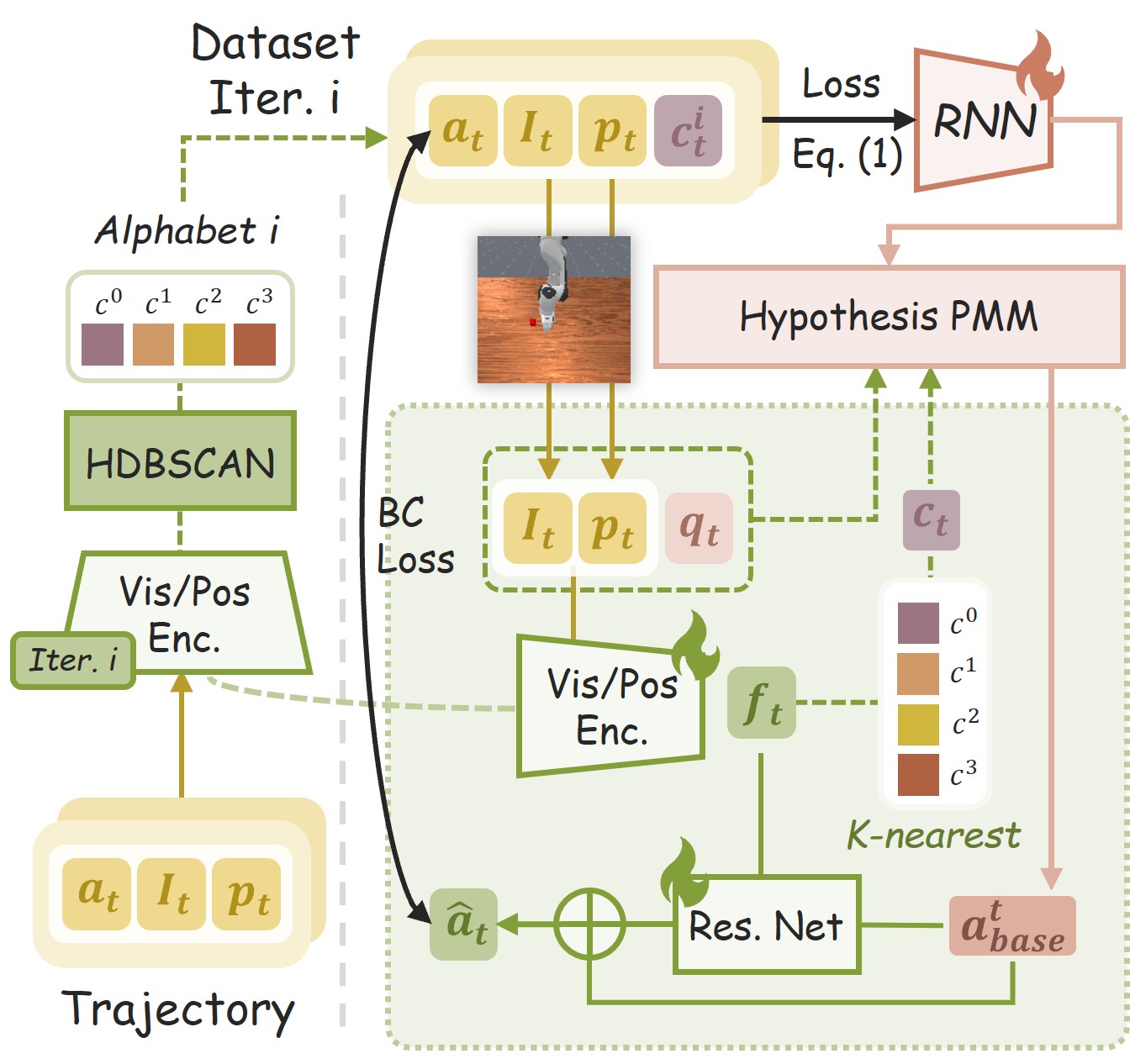}
    \caption{\textbf{Iterative Structure-Policy Co-Evolution.} 
    The training process alternates between augmenting dataset via clustering with learned encoder (\textbf{Left}) and policy learning under the given dataset (\textbf{Right}). A residual network refines the action prior from PMM and generate updated encoder for subsequent iterations.}
    \label{fig:training_loop}
\end{figure}

\subsection{PMM as a Reactive Controller}
\label{sec:PMM_controller}
After obtaining a PMM from data, the inference logic of this controller operates through a deterministic predict-act-update cycle. At time step $t$, with current state $q_t$ and grounded input $c_t$, the PMM first generates a coarse action prior $a_{\text{base}}^{t}$. Then, given the global guidance $a_{\text{base}}^{t}$, the fine-grained control is realized by the residual policy $\pi_{\psi}$ and the encoder $\phi_\theta$.
\begin{subequations}
\begin{align}
    a_{\text{base}}^{t}  &= \mathbb{E}[\lambda(\cdot|q_t, c_t)], \\
    c_t  & = \operatorname{HDBSCAN}\!\left(\phi_\theta(o_t)\right), \\
    \hat{a}_{\rm t}
&=
\underbrace{a_{\text{base}}^t}_{\text{Fixed}}
+ \pi_{\psi}\!\left(
q_t,\,
\phi_\theta(o_t),\,
{a}_{\text{base}}^t
\right). \label{eq:ac_pred}
\end{align}
\end{subequations}

Upon execution, the controller updates its internal state to $q_{t+1}$. In our formulation, the transition may remain ambiguous due to the probabilistic $\lambda$ function. To alleviate, we exploit the observed next symbol $c_{t+1}$. Specifically, after executing the action, the controller observes $c_{t+1}$ and selects the successor state whose outgoing edge signature contains this observation. Formally, the next state is determined as follows, where $\mathbb{I}_{(\cdot)}$ prioritizes transitions whose next-input includes $c_{t+1}$, and $P(q' \mid q_t, c_t)$ is used to select the most probable transition when multiple nodes $q'$ are reachable from $q$ under input $c_{t+1}$:
\begin{equation}
    q_{t+1} = \underset{q' \in \delta(\cdot|q_t,c_t)}{\arg\max}
    \Big[
    \mathbb{I}_{c_{t+1} \in {\rm NIS}(q')} 
    + \epsilon \cdot P(q' \mid q_t, c_t)
    \Big],
\end{equation}

\begin{figure*}[t]
    \centering
    \includegraphics[width=\linewidth]{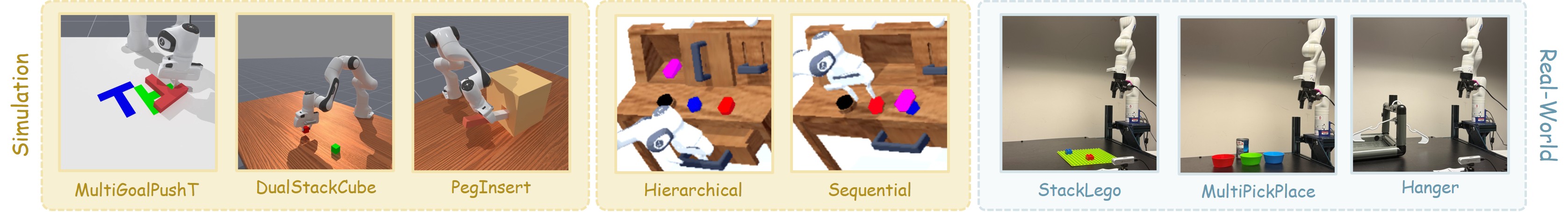}
    \caption{\textbf{Experiments in Both Simulation and Real-World.} (\textbf{Left}) Complex manipulation; (\textbf{Middle}) TAMP; (\textbf{Right}) Real-world scenarios.}
    \label{fig:exp_setup}
\end{figure*}

\subsection{Iterative Residual Learning and Refinement}
\label{sec:residual_learning}
To identify an effective feature space that maximizes performance, we propose an iterative Expectation–Maximization (EM) training paradigm (see \cref{alg:iterative_learning} and \cref{fig:training_loop}) to jointly train the encoder $\phi(\theta)$ and the residual network $\pi(\psi)$.
Intuitively, because discrete clustering operations block gradient backpropagation, the encoder's features and the resulting state machine structure cannot be optimized end-to-end.
Thus, instead of a static one-shot pipeline, we use the EM-style algorithm to process alternates between refining the structural hypothesis (E-step) and optimizing the control policy (M-step), driven by the following joint objective:
\begin{equation}
   \mathcal{J}(\theta, \psi)
=
\mathbb{E}_{\tau \sim \mathcal{D}} \Big[
\big\| a_t - \hat{a}_{\rm t} \big\|^2
+ \lambda_{\text{reg}} \, \mathcal{L}_{\text{center}}
\Big]. \label{loss}
\end{equation}
where $\mathcal{L}_{\text{center}}$ is a regularization term pulling features towards cluster centers to maintain separability, formally defined as $\mathcal{L}_{\text{center}} = \frac{1}{2} \| f_t - \mu_{c_t} \|^2$, where $f_t$ is the encoded feature and $\mu_{c_t}$ is the centroid of its assigned cluster $c_t$.

\subsubsection{\textbf{E-Step (Structure Extraction)}} 
Given the current encoder parameters $\theta^{(k)}$, we freeze the feature space and perform the structure discovery pipeline. This yields updated cluster assignments $\Sigma^{(k+1)}$ and a refined PMM $\mathcal{M}^{(k+1)}$. Crucially, this step re-segments the task based on the feature space learned from the previous control iteration.

\subsubsection{\textbf{M-Step (Policy Optimization)}} 
Fixing $\mathcal{M}^{(k+1)}$, we optimize the residual policy $\pi_\psi$ and fine-tune the encoder $\phi_\theta$ via behavior cloning. The gradient descent update is:
\begin{equation}
    \theta^{(k+1)}, \psi^{(k+1)} \leftarrow \text{Optimizer} \left( \nabla_{\theta, \psi} \mathcal{J}(\theta^{(k)}, \psi^{(k)}) \right)
\end{equation}

This EM-style training procedure ensures that the encoder learns a task-relevant manifold where ``states" are both structurally consistent (for PMM) and control-effective (for residuals), avoiding trivial solutions such as collapsing to a single node or overfitting to every time step.

\section{Experiments}
In this section, we present extensive experimental results to address the following questions:
\begin{itemize}
    \item \textbf{Q1}: Can \ENAP\ achieve improved control efficacy with extracted System 2?
    \item \textbf{Q2}: Can \ENAP\ extract semantically meaningful structures from unlabeled demonstrations?
    \item \textbf{Q3}: Can \ENAP\ exhibit the unique advantages of state machines over other symbolic structures?
\end{itemize}

\subsection{Experiment Setup}
We evaluate across three domains, as demonstrated in Fig.~\ref{fig:exp_setup}; detailed specifications are in the Appendix~\ref{app:experiments}. In the sequel, we denote our method as \ENAP\ $(\cdot)$, where $\cdot$ specifies the policy from which the state machine is extracted. We use \textcolor{harpcolor}{\textbf{ENAP$^*$}} (DINO) to indicate we train DINO from scratch using \cref{alg:iterative_learning}. All tasks are evaluated with a success rate metric.

\subsubsection{\textbf{Complex Manipulation}} \texttt{PegInsert}: focus on precision insertion; \texttt{DualStackCube}: stack blocks (green, red) in variable orders; \texttt{MultiGoalPushT}: push a Tee to the either of diverse goals. In the settings, we compare with classic behavior cloning models \cite{gmm,transformer} and recent large-scale / generative policies \cite{dp,kim2024openvla,pi0}. Moreover, to approximate an ideal extracted state machine, we obtain an \textit{Oracle} policy, which reliably captures the task structure, using reward-shaping training or motion planning. All models are trained on identical demonstration sets on Maniskill \cite{mu2021maniskill} simulator.
\subsubsection{\textbf{Long-Horizon TAMP}} CALVIN \cite{mees2022calvin} is a widely used benchmark for long-horizon task and motion planning, 
requiring the robot to complete multi-stage manipulation sequences. 
We evaluate on two tasks with up to five stages. \texttt{Sequential} (\texttt{Seq.}): rotate block $\to$ push block $\to$ move slider $\to$ toggle switch $\to$ light bulb, where skills follow a fixed linear order. \texttt{Hierarchical} (\texttt{Hier.}): open drawer $\to$ lift block $\to$ place in drawer $\to$ lift second block $\to$ stack, where later actions depend on intermediate subgoals (e.g., robot must open the drawer before placing the cube into it). In the settings, we evaluate against multiple VLA baselines \cite{hulc,lcd,mdt,flower}, with their best-reported checkpoints.
\subsubsection{\textbf{Real-World Manipulation}} \texttt{StackLego}: Precise Lego block assembly; \texttt{MultiPickPlace}: color-matched object sorting; \texttt{Hanger}: hang the hanger on a rack. In the settings, we benchmark against the fine-tuned $\pi_{0.5}$ model \cite{pi05}.

\begin{table}[t]
\centering
\caption{{Comparison on Complex Manipulation Tasks.}}
\label{tab:manipulation_results}
\vspace{-7pt}
\resizebox{\linewidth}{!}{%
\begin{tabular}{lccc}
\hline
\multirow{2}{*}{\textbf{Method}} & \textbf{Param} 
& \cellcolor[HTML]{F8F0D2} \texttt{DualStack} 
& \cellcolor[HTML]{F8F0D2} \texttt{Peg} \\
& (M) 
& \cellcolor[HTML]{F8F0D2} \texttt{Cube} (\%) 
& \cellcolor[HTML]{F8F0D2} \texttt{Insert} (\%) \\
\hline
\multicolumn{4}{l}{\cellcolor{gray!10}\textit{- Oracle}} \\
Oracle & $2.98$ & $98.3\,{\scriptstyle \pm 0.4}$ & $86.7\,{\scriptstyle \pm 0.8}$ \\
\multicolumn{4}{l}{\cellcolor{gray!10}\textit{- Traditional Imitation Learning}} \\
Transformer \cite{transformer} & $63.81$ & $38.7\,{\scriptstyle \pm 6.0}$ & $51.8\,{\scriptstyle \pm 5.5}$ \\
GMM \cite{gmm} & $46.11$ & $73.6\,{\scriptstyle \pm 2.3}$ & $53.1\,{\scriptstyle \pm 2.6}$ \\
\multicolumn{4}{l}{\cellcolor{gray!10}\textit{- Generative \& VLA Policies}} \\
Diffusion Policy \cite{dp} & $114.39$ & $41.2\,{\scriptstyle \pm 7.2}$ & $31.1\,{\scriptstyle \pm 6.8}$ \\
OpenVLA \cite{kim2024openvla} & $7652.10$ & $69.8\,{\scriptstyle \pm 2.0}$ & $42.3\,{\scriptstyle \pm 2.8}$ \\
$\pi_0$ \cite{pi0} & $3288.52$ & $73.4\,{\scriptstyle \pm 1.2}$ & $51.6\,{\scriptstyle \pm 1.4}$ \\
\multicolumn{4}{l}{\cellcolor{gray!10}\textit{- State-Machine-Based}} \\
\ENAP\ (Oracle) & $\mathbf{2.66}$ & $98.8\,{\scriptstyle \pm 0.3}$ & $85.6\,{\scriptstyle \pm 0.6}$ \\
\textcolor{harpcolor}{\textbf{ENAP$^*$}} (DINO) & 22.94 & $\mathbf{76.0\,{\scriptstyle \pm 2.0}}$ & $\mathbf{63.2\,{\scriptstyle \pm 2.4}}$ \\
\hline
\end{tabular}%
} 
\end{table}
\begin{figure}[t]
    \centering
    \includegraphics[width=\linewidth]{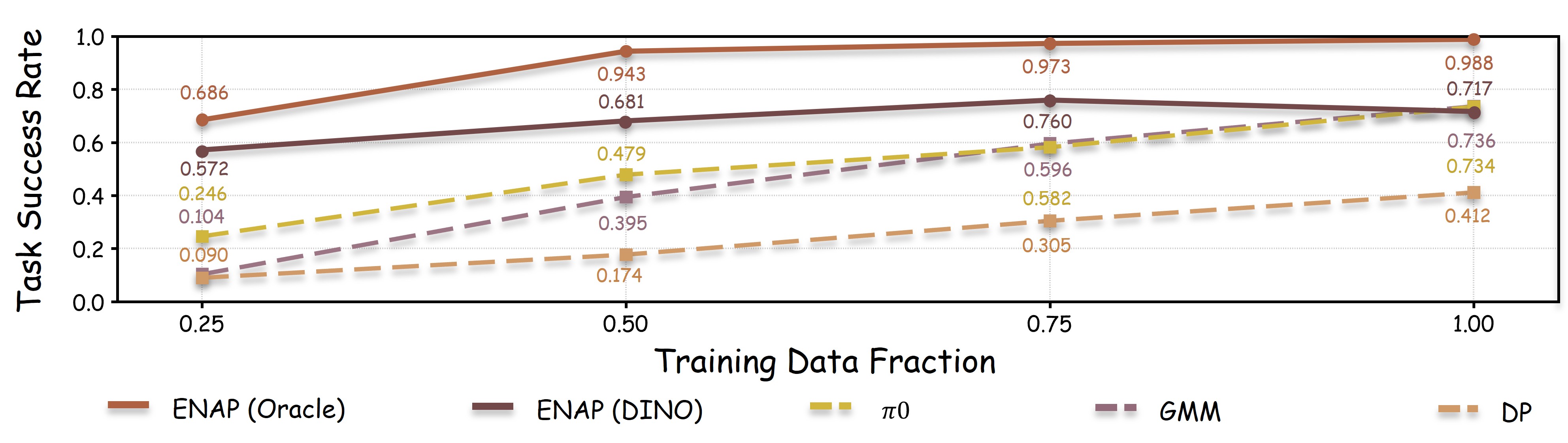}
    \caption{\textbf{Data-scaling comparison on} \texttt{DualStackCube}. \ENAP\ remains robust in low-data regimes, while other methods degrade significantly.}
    \label{fig:scaling}
\end{figure}

\begin{figure*}[t]
    \centering
    \includegraphics[width=\linewidth]{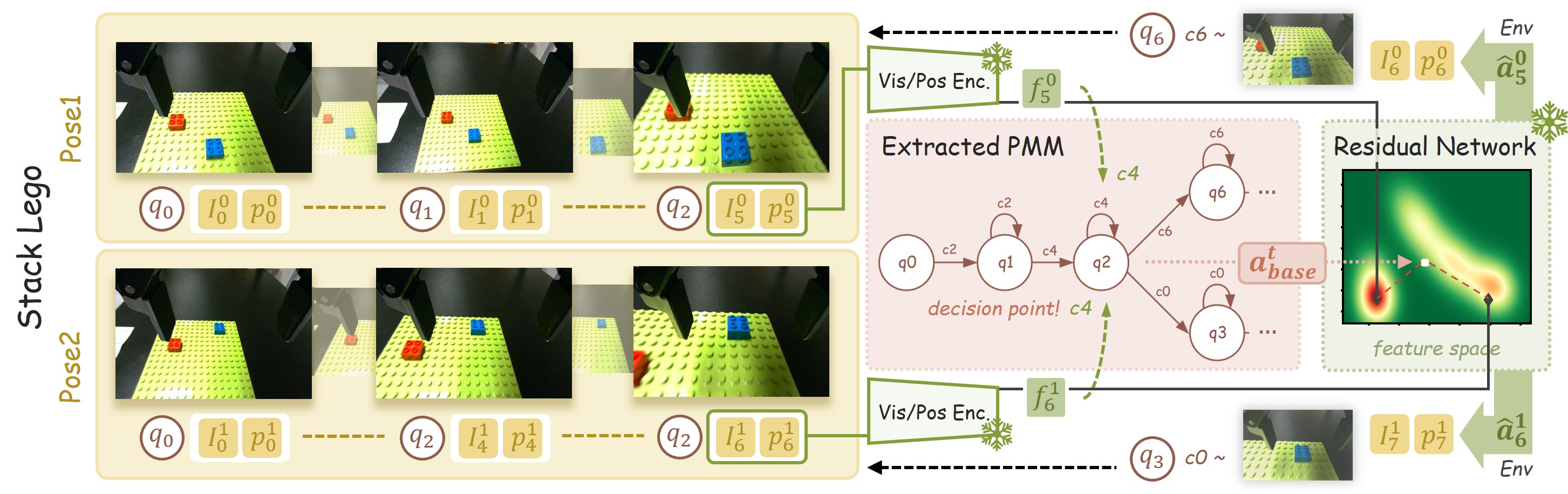}
    \caption{
    \textbf{Mechanism of Logical Branching at Decision Points.}
    We visualize how \ENAP\ handles multi-modal \texttt{StackLego} tasks. 
    At the critical decision node ($q_2$), the extracted PMM identifies a branching structure based on distinct future outcomes ($q_3$ vs.\ $q_6$). 
    By conditioning on the current observation, the residual policy guides the transition into the correct logical mode. 
    Here $I_t^i$ and $p_t^i$ denote the image observation and pose at timestep $t$ of trajectory $i$. 
    The predicted action $\hat{a}_t$ follows \cref{eq:ac_pred}. 
    The feature vector $f_t$ is obtained by encoding the observation $o_t = [I_t; p_t]$ via the encoder $\phi_\theta$, i.e., $f_t = \phi_\theta(o_t)$. 
    For the state machine involving more Lego start poses, please refer to \cref{app-sec:aa}.
    }
    \label{fig:qualitative}
\end{figure*}

\subsection{Control Efficacy \& Efficiency  (\textbf{Q1})}

The quantitative results (Table \ref{tab:manipulation_results}) demonstrate \ENAP's ability to efficiently distill expert behavior. When using Oracle's encoder (\ENAP\ (Oracle)), our method fully achieves comparable success rate while reducing the parameter count. This confirms that the extracted PMM serves as a compact, structured representation that captures the task's essence without the overhead of massive monolithic networks.

Crucially, even without access to a task-specific encoder, our framework remains effective (\cref{tab:manipulation_results}, \cref{tab:realworld_results}). By employing our proposed \cref{alg:iterative_learning} with the DINOv2 backbone, we achieve competitive success rates. This highlights \ENAP's potential as a general-purpose algorithm capable of extracting effective policies without reliance on expert priors.

To rule out the possibility that our advantage stems merely from model size suitability, we evaluate performance across varying data fractions on \texttt{DualStackCube}. As shown in \cref{fig:scaling}, \ENAP\ maintains high efficacy even in extremely low-data regimes (25\% data). This efficiency validates that the extracted PMM captures the true underlying task topology, acting as a strong inductive bias that generalizes beyond the training distribution.

\begin{table}[t]
\centering
\caption{{Real-World Evaluation}.}
\label{tab:realworld_results}
\resizebox{\linewidth}{!}{%
\begin{tabular}{lccccc}
\hline
\multirow{2}{*}{\textbf{Method}} & \textbf{Param} & \textbf{Speed} 
& \cellcolor[HTML]{D6E3E9} \texttt{Stack} 
& \cellcolor[HTML]{D6E3E9} \texttt{Pick} 
& \cellcolor[HTML]{D6E3E9} \texttt{Hanger} \\
& (M) & (ms) 
& \cellcolor[HTML]{D6E3E9} \texttt{Lego} 
& \cellcolor[HTML]{D6E3E9} \texttt{Place} 
& \cellcolor[HTML]{D6E3E9} \texttt{Task}  \\
\hline
$\pi_{0.5}$ \cite{pi05}\rule{0pt}{7pt} & $3403$ & $6841$ & $58.82$ & $76.47$ & $64.71$ \\
\textcolor{harpcolor}{\textbf{ENAP$^*$}} (DINO)& $23$ & $281$ & $\mathbf{88.24}$ & $\mathbf{94.12}$ & $\mathbf{94.12}$ \\
\hline
\end{tabular}%
}
\end{table}
\begin{table}[t]
\centering
\caption{{Comparison on Long-Horizon TAMP Tasks.}}
\label{tab:tamp_results}
\resizebox{\linewidth}{!}{%
\begin{tabular}{lcccc}
\hline
\multirow{2}{*}{\textbf{Method/Param (M)}}
& \multicolumn{2}{c}{\cellcolor[HTML]{F8F0D2} $\texttt{Seq.}$ (\%)} 
& \multicolumn{2}{c}{\cellcolor[HTML]{F8F0D2} $\texttt{Hier.}$ (\%)} \\
\cline{2-3}\cline{4-5}
& \cellcolor[HTML]{F8F0D2} $3/5$ 
& \cellcolor[HTML]{F8F0D2} $5/5$ 
& \cellcolor[HTML]{F8F0D2} $3/5$ 
& \cellcolor[HTML]{F8F0D2} $5/5$ \\
\hline


HULC ($47$) \cite{hulc} \rule{0pt}{8pt}
& $3.0\,{\scriptstyle \pm 0.3}$ 
& $3.0\,{\scriptstyle \pm 0.2}$ 
& $87.0\,{\scriptstyle \pm 0.6}$ 
& $2.0\,{\scriptstyle \pm 0.2}$ \\

LCD ($68$) \cite{lcd} 
& $11.0\,{\scriptstyle \pm 0.5}$ 
& $9.0\,{\scriptstyle \pm 0.4}$ 
& $57.2\,{\scriptstyle \pm 0.7}$ 
& $5.0\,{\scriptstyle \pm 0.3}$ \\

MDT ($440$) \cite{mdt} 
& $78.4\,{\scriptstyle \pm 0.9}$ 
& $53.7\,{\scriptstyle \pm 1.1}$ 
& $93.8\,{\scriptstyle \pm 0.8}$ 
& $21.9\,{\scriptstyle \pm 0.6}$ \\

FLOWER ($947$) \cite{flower} 
& $91.0\,{\scriptstyle \pm 0.6}$ 
& $90.6\,{\scriptstyle \pm 0.5}$ 
& $90.8\,{\scriptstyle \pm 0.7}$ 
& $15.9\,{\scriptstyle \pm 0.4}$ \\


\ENAP\ (FLOWER) ($571$) 
& $\mathbf{97.0\,{\scriptstyle \pm 0.4}}$ 
& $\mathbf{96.8\,{\scriptstyle \pm 0.3}}$ 
& $\mathbf{95.5\,{\scriptstyle \pm 0.5}}$ 
& $\mathbf{28.2\,{\scriptstyle \pm 0.6}}$ \\
\hline
\end{tabular}%
}
\end{table}

For sequential reasoning tasks (\cref{tab:tamp_results}; in the table, 3/5 and 5/5 denote completing the first 3 or 5 subtasks, respectively.), we evaluate \ENAP\ by distilling structure directly from the latent space of a pre-trained VLA. We use FLOWER \cite{flower}, an efficient flow-based vision–language–action policy that generates robot actions from multimodal observations and language goals. It demonstrates that our unsupervised structure discovery is applicable even to the token-level representations (i.e., the transformer token embeddings encoding visual and language observations), thus offering a general enhancement to the interpretability and performance of existing black-box policies.

\begin{figure}[htbp]
    \centering
    \includegraphics[width=\linewidth]{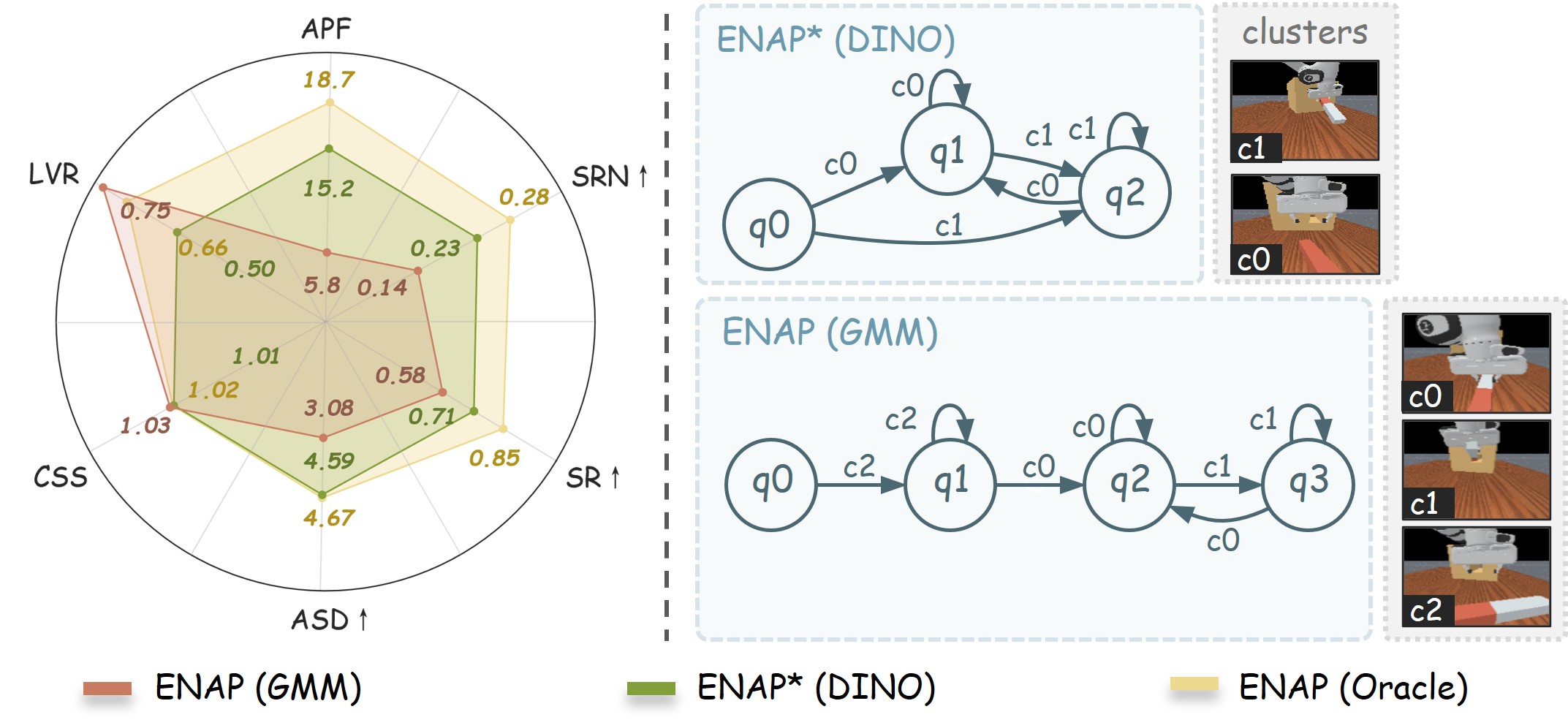}
    \caption{\textbf{Comparison of Learned Policies from} \texttt{PegInsert}. \textbf{(Left)} Radar chart of six structural metrics. \textbf{(Right)} Visualized PMMs for DINO and GMM variants. For the PMM from Oracle, see \cref{fig:firstpage}.}
    \label{fig:rader}
\end{figure}

\subsection{Emergent Structure Analysis (\textbf{Q2})}
For qualitative analysis, we refine the feature space by re-clustering with K-Means, using the cluster count inferred by HDBSCAN, to eliminate noise artifacts.

To validate the interpretability of our learned representation, we visualize the execution of a simplified real-world task: stacking Lego blocks with blue brick at two starting locations (\cref{fig:qualitative}). 
The extracted PMM autonomously discovers a decision point at node $q_2$ (the ``\textit{Grasp}" phase), where multiple valid task branches exist due to the various task configurations (e.g., grasping the brick from multiple starting location). 
During inference, the specific path selection is implicitly guided by the residual network: given the visual input, the network predicts a precise action that leads to a next-step observation consistent with only one of the valid transition signatures.

To move beyond qualitative inspection, we conduct a rigorous structural evaluation on the \texttt{PegInsert} task, comparing policies derived from Oracle, DINO, and GMM encoders (\cref{fig:rader}). We introduce six metrics: 
\begin{enumerate}
    \item \textbf{Success Rate (SR$\uparrow$)};
    \item \textbf{Success Rate weighted by Node Count (SRN$\uparrow$):} $SR / |Q|$, the efficiency of the state representation;
    \item \textbf{Action Prior Fidelity (APF):} The MSE between the PMM's coarse prior $a_{\text{base}}$ and the ground truth (extremely low values may indicate overfitting to specific trajectories);
    \item \textbf{Loop-Transition Ratio (LVR):} The ratio of self-loops to total edges;
    \item \textbf{Cluster Semantic Separability (CSS$\uparrow$):} The ratio of intra-cluster to inter-cluster visual similarity. For each cluster, we select the representative image closest to the cluster center and compute its CLIP embedding; CSS is then computed from the pairwise embedding similarities between cluster representatives; higher values indicate semantically distinct task phases;
    \item \textbf{Action Separation Distance (ASD$\uparrow$):} The average distance between mean actions of different edges. Higher values confirm that the automaton effectively partitions the control space into distinct primitives. 
\end{enumerate}
Among these metrics, \textbf{SR}, \textbf{SRN}, and \textbf{ASD} are performance-oriented measures where higher values indicate better policies, while \textbf{APF}, \textbf{LVR}, and \textbf{CSS} characterize structural properties of the learned automaton and serve primarily for qualitative structural analysis rather than direct optimization.

The radar chart reveals an insight: the best-performing model does not simply maximize or minimize all structural metrics. Firstly, \ENAP\ (Oracle)'s \textbf{LVR} and \textbf{CSS} strike a balance between phase stability and dynamic responsiveness, indicating an optimal task structure is a resilient abstraction that partitions the control space distinctly for the residual policy to handle. The high prior uncertainty (\textbf{APF}) suggests that the Oracle-derived PMM avoids overfitting to specific trajectory noise, learning a generalized ``coarse intent".

\subsection{State-Machine-Enabled Property  (\textbf{Q3})}
\subsubsection{\textbf{Recovery via Structural Loops}}
A unique advantage of the PMM controller is to perform autonomous failure recovery through structural loops. In the \texttt{StackLego} task (\cref{fig:recovery}), where precise alignment is challenging without force feedback, we observe that the agent frequently exploits the self-loop on the ``Align" node (\cref{fig:firstpage}). If the visual feedback does not match the ``Insert" signature, the controller repeatedly re-attempts the adjustment. This behavior emerges naturally from the learned topology, enhancing robustness against uncertainties.
\begin{figure}[htbp]
    \centering
    \includegraphics[width=\linewidth]{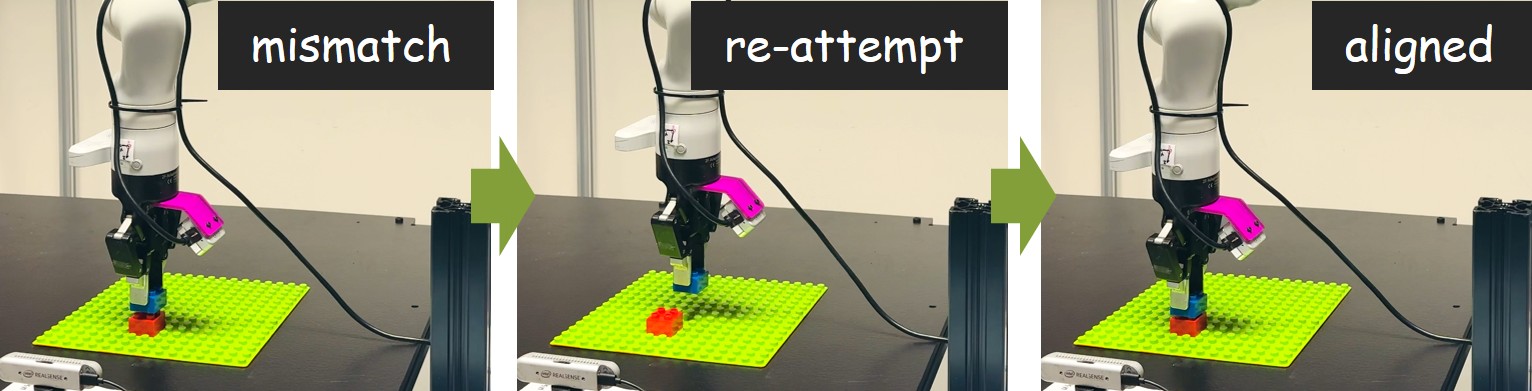}
    \caption{\textbf{Failure Recovery in} \texttt{StackLego}.}
    \label{fig:recovery}
\end{figure}

\begin{table}[t]
\centering
\caption{{MultiGoalPushT Performance Comparison.}}
\label{tab:multigoal_pusht}
\resizebox{\linewidth}{!}{%
\begin{tabular}{lccccc}
\hline
\cellcolor[HTML]{F8F0D2}\textbf{Metric} (\%) \rule{0pt}{7pt} 
& {DP} \cite{dp} 
& {GMM} \cite{gmm}
& $\pi_0$ \cite{pi0}
& {\ENAP\ (Oracle)} \\
\hline
\cellcolor[HTML]{F8F0D2}{Either}\rule{0pt}{7pt}     
& $0.41\,{\scriptstyle \pm 0.05}$
& $0.50\,{\scriptstyle \pm 0.05}$
& $0.25\,{\scriptstyle \pm 0.02}$
& $\mathbf{0.94\,{\scriptstyle \pm 0.03}}$ \\
\cellcolor[HTML]{F8F0D2}{Nearest}     
& $0.38\,{\scriptstyle \pm 0.07}$
& $0.38\,{\scriptstyle \pm 0.03}$
& $0.13\,{\scriptstyle \pm 0.04}$
& $\mathbf{0.84\,{\scriptstyle \pm 0.02}}$ \\
\hline
\end{tabular}%
}
\end{table}

\subsubsection{\textbf{Generalization via Policy Mixtures}}
We further investigate \ENAP's capability to distill a generalist state machine from mixtures of heterogeneous policies. {The fundamental advantage of policy mixture is skill transfer: by combining scarce demonstrations from similar tasks into a unified, general dataset, the agent can leverage shared experiences (e.g., universal pushing mechanics) to improve overall proficiency across all sub-tasks.} To evaluate this, we employ the \texttt{MultiGoalPushT} task, where the agent must push a T-block to either a blue or green target. Instead of training a monolithic policy, which is notoriously difficult on multi-modal distributions due to divergent optimal trajectories averaging out conflicting gradients, we generate our training dataset by combining trajectories from two distinct specialist policies (one for each target, sharing a frozen vision encoder). 

As shown in Table \ref{tab:multigoal_pusht}, \ENAP\ outperforms all baselines in both the ``Either" Goal metric (reaching any target, reflecting multi-modal competence) and the strict ``Nearest" Goal metric (pushing toward the target closest to the initial configuration, requiring a higher spatial reasoning capability). This confirms that \ENAP\ avoids collapsing to a single dominant behavior, successfully separating the conflicting data modes into distinct topological paths while retaining the shared pushing skills.

\subsection{Ablation Studies}
We investigate two critical components governing structure discovery: (1) the similarity threshold $\tau_{\text{sim}}$, which determines the granularity of the automaton; and (2) the training of the RNN, which motivates state differentiation. As visualized in \cref{fig:ablation}, performance with an intermediate $\tau_{\text{sim}}$ and all RNN training losses peaks, validating the efficacy of our design. Additional ablations are provided in \cref{sec_app:ic}.

\begin{figure}[t]
    \centering
    \includegraphics[width=\linewidth]{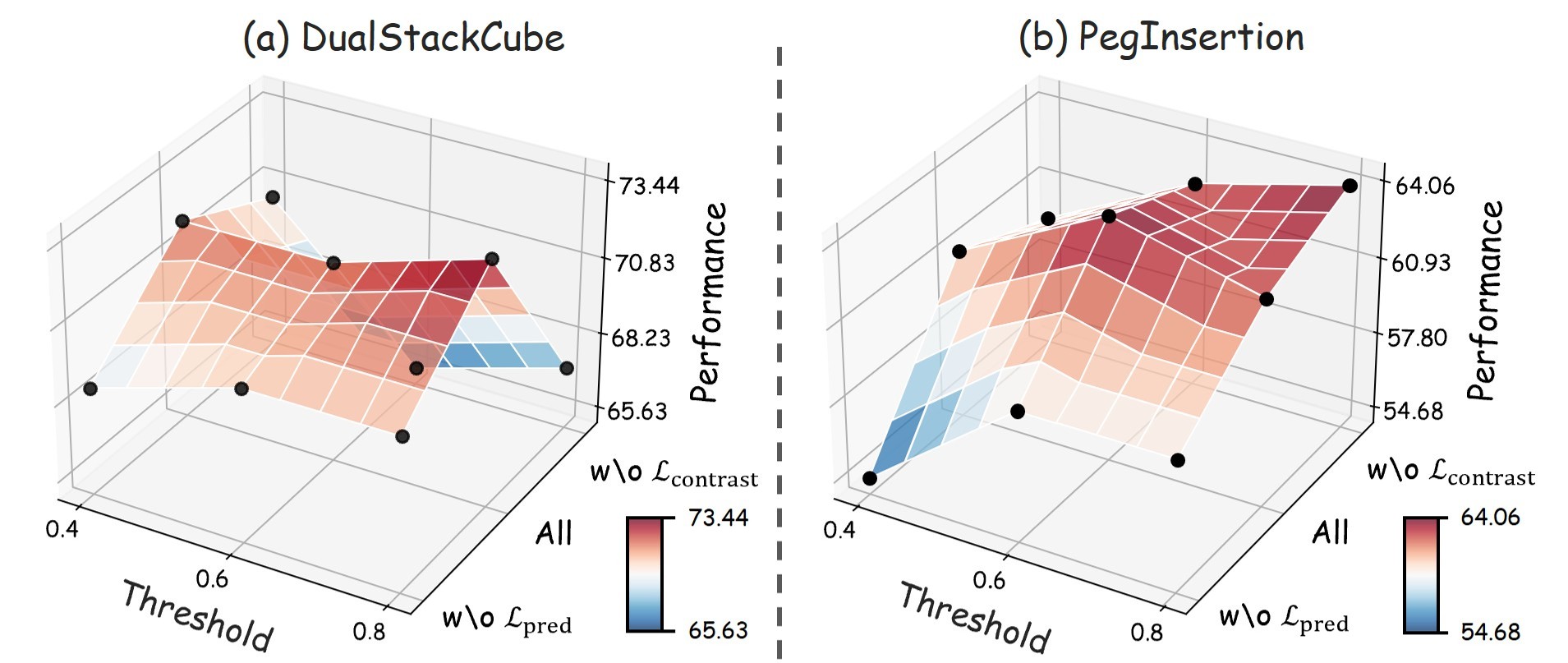}
    \caption{\textbf{Ablation on Structure Discovery Parameters.} Success rates across varying similarity thresholds $\tau_{\text{sim}}$ and RNN loss configurations. Black dots indicate measured data points, while the surface shows interpolated values.}
    \label{fig:ablation}
\end{figure}

\section{Discussion: A POMDP Perspective}
We formulate the justification for \ENAP's bi-level architecture by aligning it with the Bellman equation. Consider the true value function $V(q, o, s)$ of the underlying POMDP, conditioned on the latent task mode $q$, observation $o$, and environment state $s$. The Bellman equation decomposes as:
\begin{equation}
\begin{aligned}
V(q, o, s)
&= \sum_{a, q'} P(q', a \mid q, o)
\Big[
    R(s, a) \\
&\qquad\qquad
    + \gamma \sum_{s', o'} P(s', o' \mid s, a)\, V(q', o', s')
\Big]
\end{aligned}
\end{equation}
where the joint transition-policy distribution $P(q', a | q, o)$ represents the coupled decision process: choosing an action $a$ and transitioning to a new internal mode $q'$. In our framework, this distribution is explicitly factorized using the PMM structure:
\begin{equation}
\begin{aligned}
P(q', a \mid q, o)
&= \underbrace{\delta(q'|q, c)}_{\text{Topological Transition}}
\cdot \sum_{a_{\text{base}}^t}
    \underbrace{\lambda(a_{\text{base}}^t|q, c)}_{\text{Action Prior}} \\
&\qquad\qquad\cdot
    \underbrace{\pi_{\psi}(a \mid q, o, a_{\text{base}}^t)}_{\text{Residual Correction}}
\end{aligned}
\end{equation} 

The PMM captures the global dynamics ($\delta$) and coarse intent ($\lambda$), leaving the neural network to focus solely on the local residual term $\pi_{\psi}$. This reduces the complexity of the function approximation task, explaining the framework's high sample efficiency.

Furthermore, by augmenting the state space to $\tilde{s} = (q, o)$, we enable planning over the belief space. The value function over this augmented state is given by:
\begin{equation}
    \hat{V}(\tilde{s}) = \max_{a} \left[ \hat{r}(\tilde{s}, a) + \gamma \sum_{\tilde{s}'} \hat{P}_{\tilde{s}}(\tilde{s}' | \tilde{s}, a) \hat{V}(\tilde{s}') \right]
\end{equation}
Assuming the learned PMM states $Q$ effectively summarize the history $h$ (i.e., $q$ is a sufficient statistic), we have the approximation $\hat{V}(\tilde{s}) \approx \mathbb{E}_{s \sim P(s|h)} [V(q, o, s)]$. Our explicit recording of next-input signatures on PMM edges serves to approximate the transition dynamics $\hat{P}_{\tilde{s}}$, allowing the agent to deterministically track its state $\tilde{s}$ and execute optimal planning. 

\section{Conclusion}
\noindent\textbf{What is a good abstraction?} Defining a ``good'' abstraction is ill-posed, as the appropriate granularity differs across contexts. In our paper, abstraction quality is performance-driven: the optimal structure is the one that best minimizes the downstream policy objective, which is why different network backbones yield divergent abstractions (\cref{fig:rader}).

\noindent\textbf{Limitations.} First, the quality of the emergent structure is sensitive hyperparameters. Furthermore, scaling to cross-task and cross-embodiment settings remains an open direction, for instance, by extracting abstractions over the semantic token spaces of generalist VLA models.

In this work, we presented \ENAP, a framework for learning emergent neural automaton policies from visuomotor trajectories. By unifying active automata inference with residual control, \ENAP\ unsupervisedly discovers a bi-level structure where a discrete Mealy machine governs high-level planning and networks handle continuous execution. This representation not only enables sample-efficient learning in long-horizon tasks but also provides an explicit, interpretable map of the agent's latent decision logic, thus offering a viable path toward generalizable and interpretable intelligence.

\newpage

\bibliographystyle{plainnat}
\bibliography{references}

\clearpage
\appendices 

\section{$L^*$ Algorithm Walkthrough: Classical vs. Ours}
\subsection{$L^*$ Algorithm}\label{app:L}
To provide intuition for the $L^*$ algorithm, we present a concrete example of learning a target FSM that accepts strings with an even number of $a$'s and $b$'s.

We initialize the observation table with prefixes $\mathcal{U}=\{\epsilon\}$ (candidate states, top portion) and test suffixes $\mathcal{E}=\{\epsilon\}$ (distinguishing experiments, columns); see Tab.~\ref{tab:initial_hyp_a}. The lower portion consists of one-step extensions $\mathcal{U}\cdot\Sigma$ obtained via Membership Queries (MQs). Let $\text{row}(u)$ denote the output vector associated with a prefix $u$ over all tests in $\mathcal{E}$. The table is \textit{closed} if every transition signature in the lower portion is represented in the upper portion. Since the signature of prefix $a$ ($\text{row}(a)=[0]$) differs from that of the initial state ($\text{row}(\epsilon)=[1]$), the table is not closed. We then move $a$ into $\mathcal{U}$ (adding it to the upper portion) to introduce a new state, and automatically generate its one-step extensions ($a \cdot \Sigma$) to populate the lower portion. This results in Tab.~\ref{tab:initial_hyp_b}, where $\cdot$ denotes a one-step extension. The table is now both closed and \textit{consistent} (i.e., equivalent states do not lead to conflicting future transitions), yielding the hypothesis $\mathcal{M}_1$ as in Fig.\ref{fig:phase1}.

\begin{figure}[h]
    \centering
    \begin{minipage}[c]{0.3\linewidth}
        \centering
        \captionof{table}{Initial \\ Hypothesis}\label{tab:initial_hyp_a}
        \begin{tabular}{r|c}
        \hline
        $T_1$ & $\epsilon$ \\ \hline
        $\epsilon$ & 1 \\ \hline
        $a$ & 0 \\ 
        $b$ & 0 \\  \hline
        \end{tabular}
    \end{minipage}
      \begin{minipage}[c]{0.3\linewidth}
        \centering
        \captionof{table}{Updated \\ Hypothesis}\label{tab:initial_hyp_b}
        \begin{tabular}{r|c}
        \hline
        $T_1$ & $\epsilon$ \\ \hline
        $\epsilon$ & 1 \\
        $a$ & 0 \\ \hline
        $b$ & 0 \\ 
        $a \cdot a$ & 1 \\ 
        $a \cdot b$ & 0 \\ \hline
        \end{tabular}
    \end{minipage}
    \begin{minipage}[c]{0.35\linewidth}
        \centering
        \includegraphics[width=0.75\linewidth]{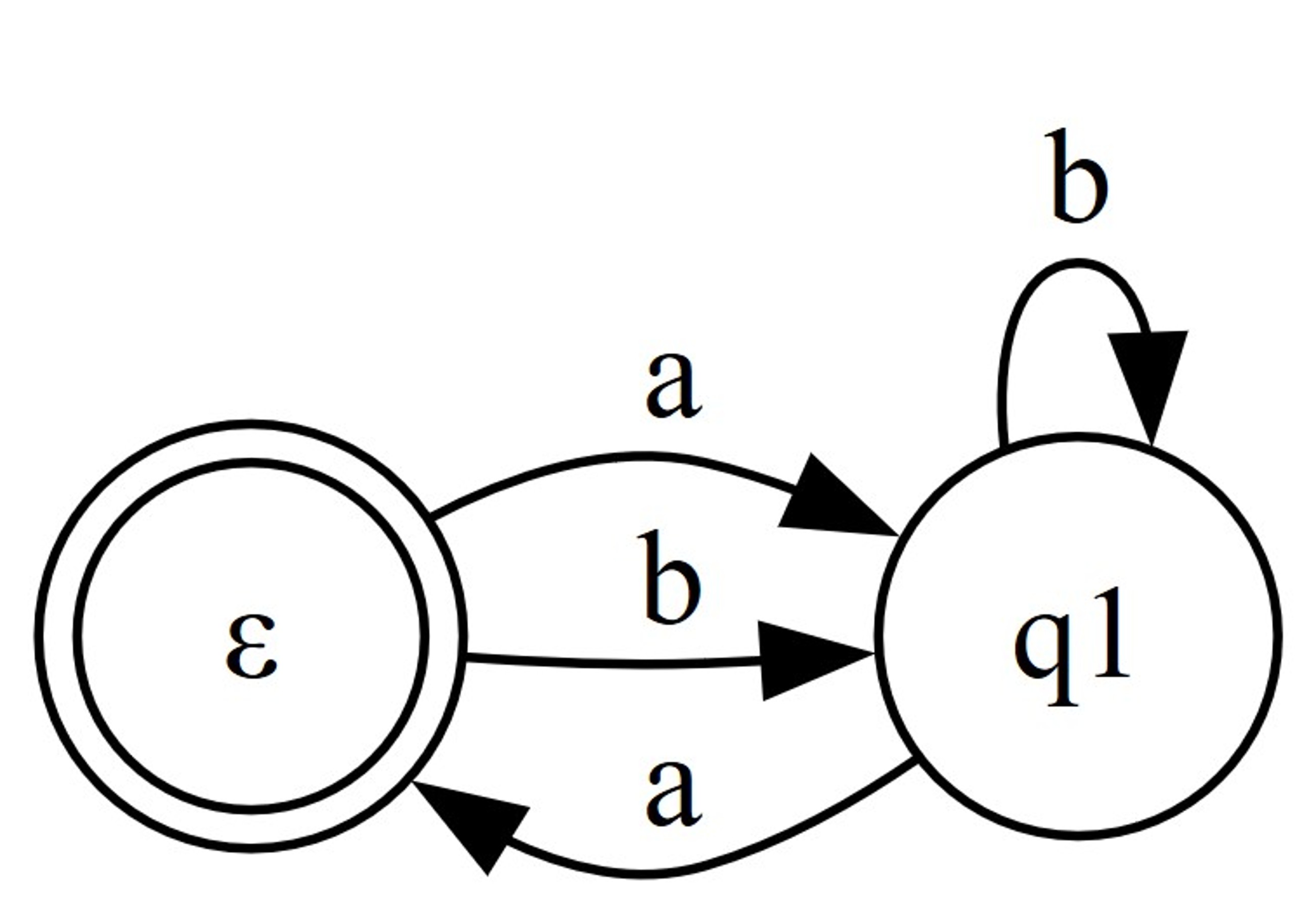}
        \vspace{2pt}
        \caption{\textbf{Hypothesis} $\mathcal{M}_1$}
        \label{fig:phase1}
    \end{minipage}
    \vspace{-5pt}
\end{figure}

However, $\mathcal{M}_1$ fails to accept the counterexample $bb$. To address this, we incorporate $bb$ along with all its prefixes (in this case, only $b$) into the state set $\mathcal{U}$, forming the upper portion of Tab.~\ref{tab:phase2}. As a result, the lower portion is automatically extended by issuing Membership Queries (MQs) for all one-step extensions ($u \cdot c$ for $u \in \mathcal{U}, c \in \Sigma$). 


Initially, this addition makes the table \textit{inconsistent}. Specifically, under the single test $\epsilon$, states $a$ and $b$ appear indistinguishable ($\text{row}(a)=\text{row}(b)=[0]$). However, extending them with symbol $a$ exposes a behavioral discrepancy, since $\text{row}(a \cdot a)=[1] \neq \text{row}(b \cdot a)=[0]$. To resolve this structural ambiguity, we must introduce the distinguishing suffix $a$ as an additional test column in $\mathcal{E}$. The resulting table, shown as Tab.~\ref{tab:phase2}, successfully separates the previously confounded states $a$ and $b$, and yields the updated hypothesis $\mathcal{M}_2$.


\begin{figure}[h]
    \centering
    \begin{minipage}[c]{0.49\linewidth}
        \centering
         \captionof{table}{Closed but Inconsistent}\label{tab:phase2}
        \begin{tabular}{r|cc}
        \hline
        $T_2$ & $\epsilon$ & $a$ \\ \hline
        $\epsilon$ & 1 & 0 \\
        $a$ & 0 & 1 \\
        $b$ & 0 & 0 \\ 
        $bb$ & 1 & 0 \\ \hline
        $aa$ & 1 & 0 \\
        $ab$ & 0 & 0 \\
        $b  \cdot a$ & 0 & 0 \\
        $bb  \cdot a$ & 0 & 1 \\
        $bb \cdot b$ & 0 & 0 \\ \hline
        \end{tabular}
    \end{minipage}
    \hfill
    \begin{minipage}[c]{0.49\linewidth}
        \centering
        \includegraphics[width=0.9\linewidth]{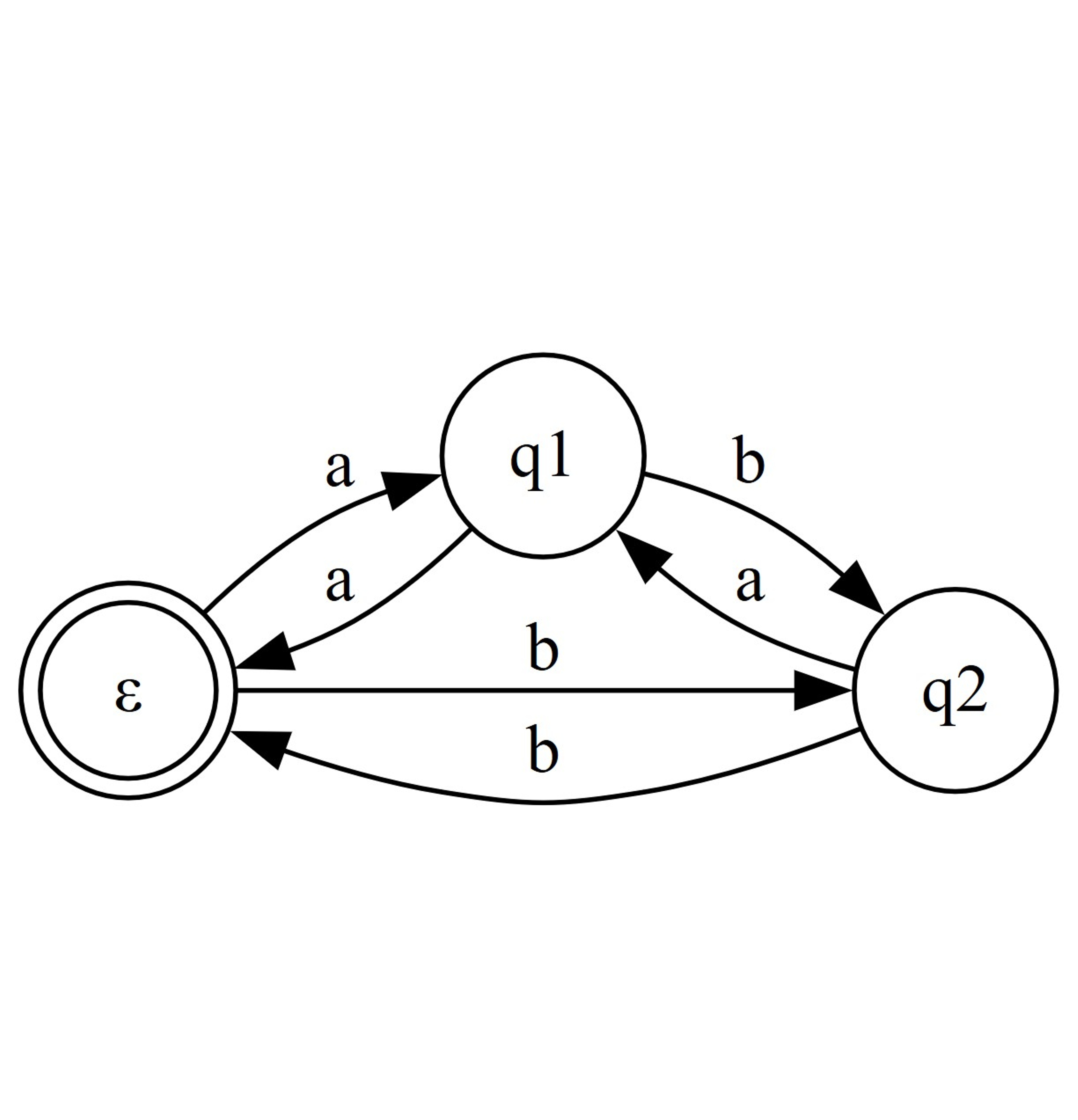}
        \caption{\textbf{Hypothesis} $\mathcal{M}_2$}
        \label{fig:phase2}
    \end{minipage}
    \vspace{-5pt}
\end{figure}


Although Tab.~\ref{tab:phase2} resolves the immediate inconsistency, subsequent equivalence queries and closure checks reveal further structural deficits (e.g., distinguishing the states reached via $b$ and $ab$). The $L^*$ algorithm naturally iterates through alternating phases of state promotion (to restore closedness) and suffix addition (to restore consistency). After adding the suffix $b$ and promoting the necessary intermediate prefixes (such as $ab$), the algorithm converges to Tab.~\ref{tab:final_phase}. This final table is strictly closed and consistent, successfully identifying the four distinct states isomorphic to the target language.

\begin{figure}[t]
    \centering
    \begin{minipage}[c]{0.39\linewidth}
        \centering
        \captionof{table}{Final Hypothesis}\label{tab:final_phase}
        \resizebox{\linewidth}{!}{
        \begin{tabular}{r|ccc}
        \hline
        $T_3$ & $\epsilon$ & $a$ & $b$ \\ \hline
        $\epsilon$ & 1 & 0 & 0 \\
        $a$        & 0 & 1 & 0 \\
        $b$        & 0 & 0 & 1 \\
        $bb$ & 1 & 0 & 0 \\
        $ab$ & 0 & 0 & 0 \\
        $abb$ & 0 & 1 & 0 \\ \hline
        $aa$ & 1 & 0 & 0 \\
        $ba$ & 0 & 0 & 0 \\
        $bba$ & 0 & 1 & 0 \\
        $bbb$ & 0 & 0 & 1 \\
        $ab \cdot a$ & 0 & 0 & 1 \\
        $abb \cdot a$ & 1 & 0 & 0 \\
        {$abb \cdot b$} & 0 & 0 & 0 \\ \hline
        \end{tabular}
        }
    \end{minipage}
    \hfill
    \begin{minipage}[c]{0.49\linewidth}
        \centering
        \includegraphics[width=\linewidth]{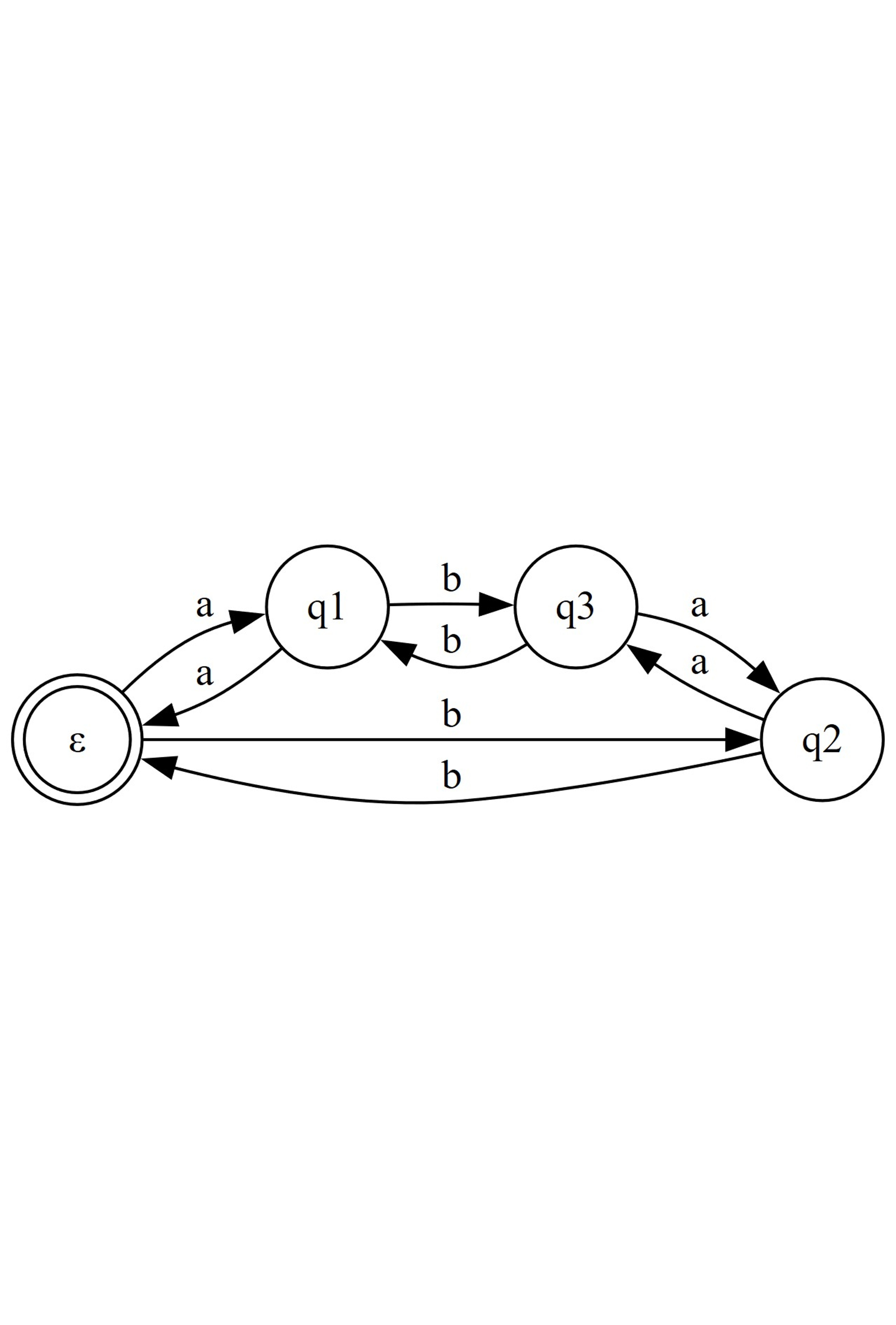}
        \caption{\textbf{Final} $\mathcal{M}_3$}
        \label{fig:final}
    \end{minipage}
    \vspace{-15pt}
\end{figure}

\subsection{Extended $L^*$ Algorithm (\cref{alg:harp_discovery})}
\subsubsection{\textbf{Experiment Setup}}
To provide a transparent walkthrough of the structure extraction process, we utilize the discrete $4 \times 4$ FrozenLake environment, where the agent can choose actions $\{\rm Up(\textit{U}), Down(\textit{D}), Right(\textit{R}), Left(\textit{L})\}$ to reach the goal; see Fig.~\ref{fig:frozenlake}. We use RNNs with random initialization. Since the state space is inherently discrete, we bypass the clustering phase and map the grid position directly to the input alphabet $\Sigma$, such that the symbol $c_t$ corresponds to the grid index (e.g., $c_{14}$ represents position 14). We collect two successful trajectories from Start ($c_0$) to Goal ($c_{15}$) that share a common history but diverge at a decision point $c_9$. 
\begin{itemize}
    \item $\tau_1$: $c_0 \xrightarrow{D} c_4 \xrightarrow{D} c_8 \xrightarrow{R} c_9 \xrightarrow{D} \mathbf{c_{13}} \xrightarrow{R} c_{14} \xrightarrow{R} c_{15}$
    \item $\tau_2$: $c_0 \xrightarrow{D} c_4 \xrightarrow{D} c_8 \xrightarrow{R} c_9 \xrightarrow{R} \mathbf{c_{10}} \xrightarrow{D} c_{14} \xrightarrow{R} c_{15}$
\end{itemize}

\subsubsection{\textbf{Iteration}}
This setup is specifically chosen to demonstrate how the algorithm iteratively discovers the topological branching at state $c_9$ and the subsequent state convergence at $c_{14}$. In the following figures, edges are annotated with the tuple: input symbol, action list, probability, and ${\rm NIS}(\cdot)$.

The algorithm initializes with the start state $q_0$ (history $\epsilon$) and expands via \textit{Self-Supervised MQ} to find reachable states. It identifies $q_1$ (history $0$) via the transition enabled by $c_0$; see~\cref{fig:iter1}. However, due to limited exploration, the initial hypothesis  only loops at $q_1$.
Running the \textit{NFA-style EQ} on trajectory $\tau_1$ reveals a failure: the sequence $c_0 \xrightarrow{D} c_4 \xrightarrow{D} c_8$ cannot be explained. The prefix $(c_0, c_4, c_8)$ is returned as a counterexample.

\begin{wrapfigure}{r}{0.2\textwidth}
\vspace{-11pt}
\centering
\includegraphics[width=\linewidth]{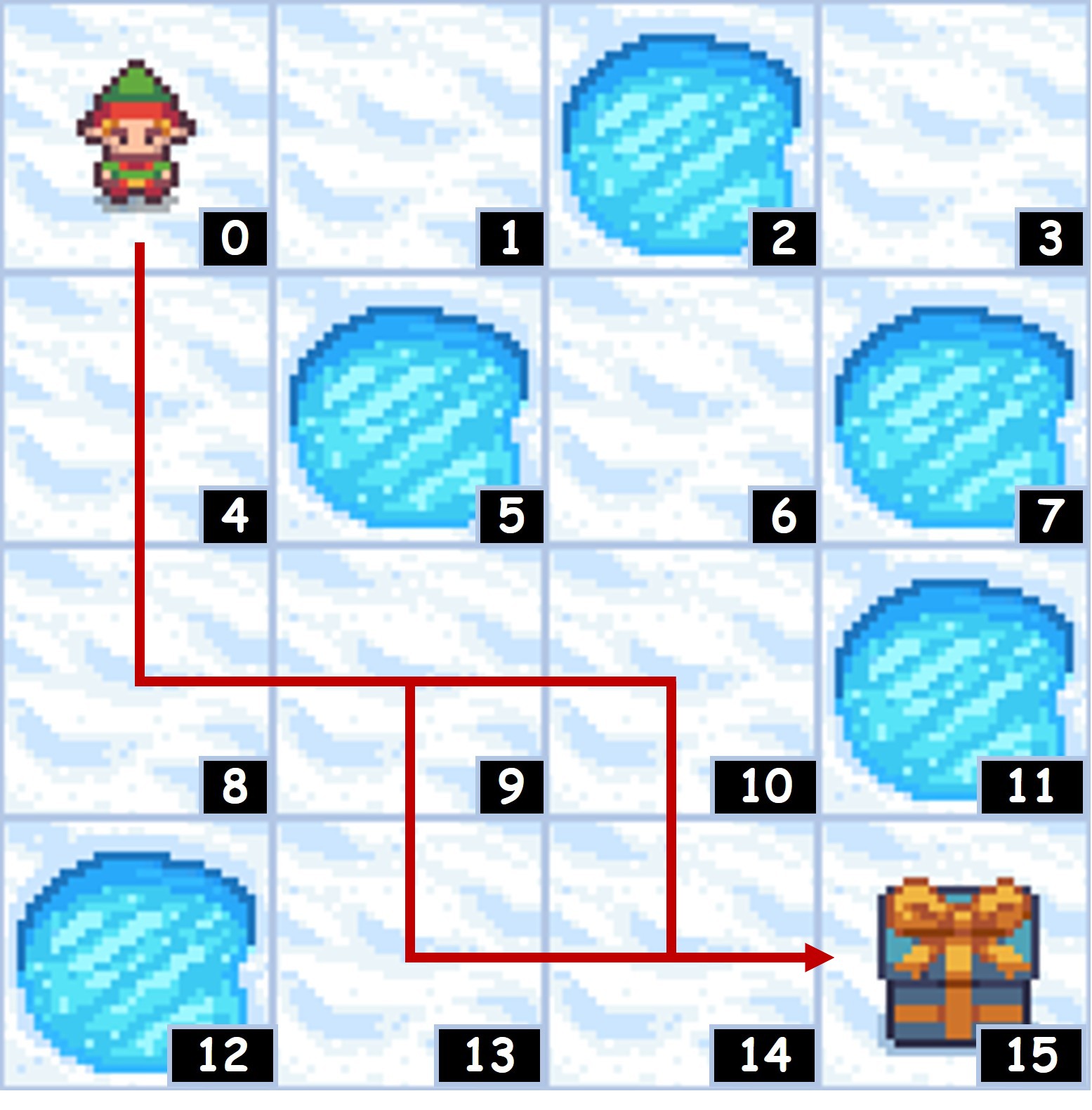}
\vspace{-15pt}
\caption{\textbf{FrozenLake Map}.}
\label{fig:frozenlake}
\vspace{-5pt}
\end{wrapfigure}

\begin{figure*}[t]
    \centering
    \begin{minipage}[c]{0.25\linewidth}
        \centering
        \includegraphics[width=\linewidth]{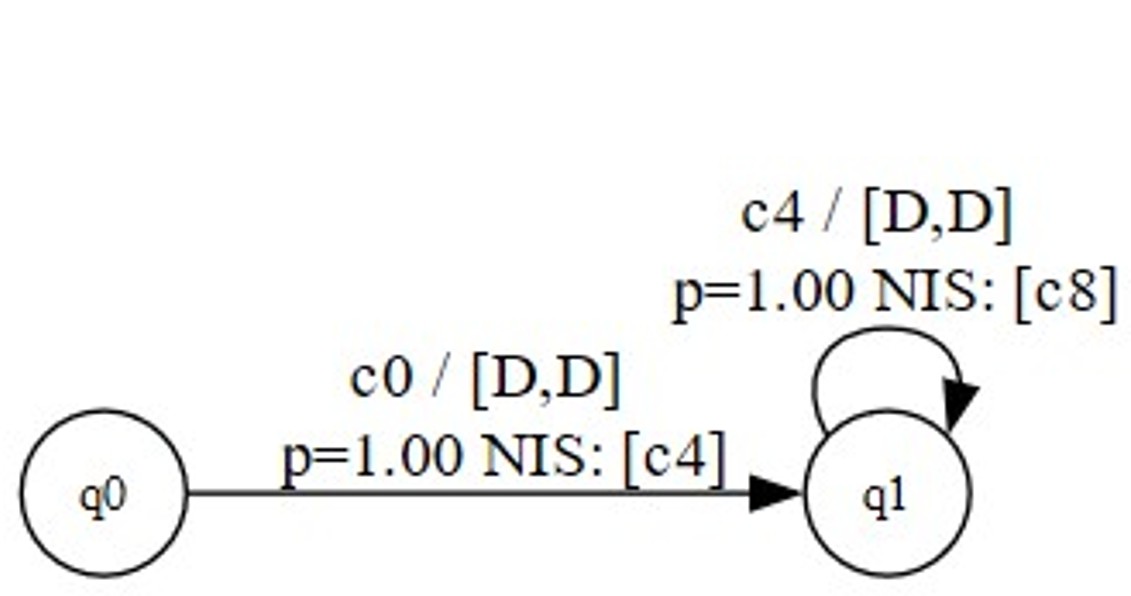}
        \caption{\textbf{Iteration 1}: Initial hypothesis.}
        \label{fig:iter1}
    \end{minipage}
    \hfill
    \begin{minipage}[c]{0.63\linewidth}
        \centering
        \includegraphics[width=\linewidth]{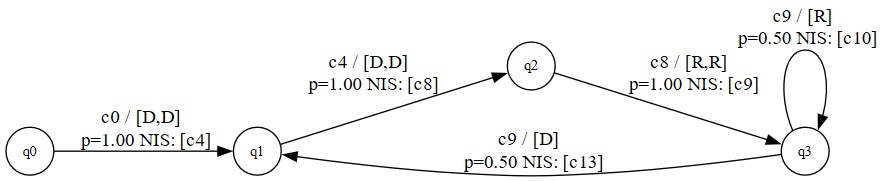}
        \caption{\textbf{Iteration 2}: Linear chain extension.}
        \label{fig:iter2}
    \end{minipage}
\end{figure*}

\begin{figure*}[htbp]
    \centering
    \includegraphics[width=0.9\linewidth]{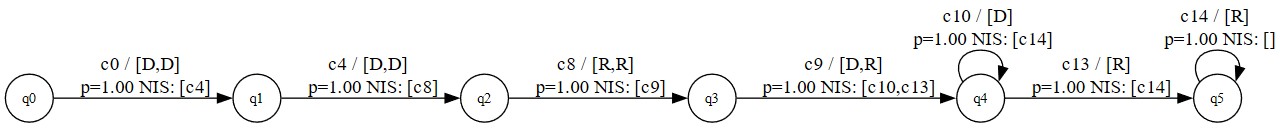} 
    \vspace{-5pt}
    \caption{\textbf{Iteration 3}: Discovery of the decision point.}
    \label{fig:iter3}
\end{figure*}

\begin{figure*}[h]
    \centering
    \includegraphics[width=\linewidth]{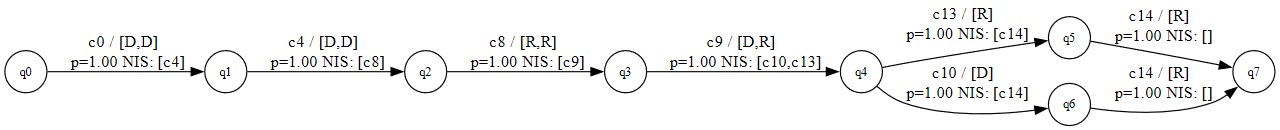} 
    \vspace{-15pt}
    \caption{\textbf{Iteration 4}: Stabilized hypothesis capturing the full topology.}
    \label{fig:iter4}
\end{figure*}

Adding the counterexample to the prefix set $\mathcal{U}$ triggers \textit{Closedness} checks, expanding the state space to cover the history $c_0 \to c_4 \to c_8$. The new hypothesis (\cref{fig:iter2}) successfully models the linear path up to $c_9$. However, it fails to validly explain the divergence into $c_{13}$ or $c_{10}$. The prefix $(c_0, c_4, c_8, c_9, c_{13})$ is returned as a counterexample.

Incorporating the new counterexample, the algorithm discovers the critical branching state $q_4$ in \cref{fig:iter3}. The MQ reveals two distinct transitions from $q_4$: one leading to $q_5$ and another leading to $q_4$ itself via self-loop $c_{10}$. The hypothesis  now captures the ``lower" path ($\tau_1$) correctly but still misinterprets the ``upper" path ($\tau_2$) convergence, failing at $(c_0, c_4, c_8, c_9, c_{10}, c_{14})$.

The final counterexample forces the expansion of the ``lower" path state $q_6$ (history $c_0 \to \cdots \to c_{10}$). Crucially, the MQ discovers that both $q_5$ (from $\tau_1$) and $q_6$ (from $\tau_2$) transition to the same next-step region under input $c_{14}$, which maps to a unified state $q_7$.
The resulting hypothesis (\textit{Stable Phase Pruning} applied; see \cref{fig:iter4}) correctly models the full topology: common past $\to$ branching $\to$ convergence. The EQ finds no further counterexamples, and the hypothesis stabilizes.


\section{Theoretical Motivation for RNN Encoder}
\label{app:rnn_theory}

Our framework relies on the premise that the continuous hidden states of a trained RNN can represent the discrete states of an FSM. This architectural choice is theoretically motivated by the \textit{saturation} (the hidden states $h_t \in \mathbb{R}^h$ tend to cluster around the vertices of the hypercube $\{\pm 1\}^d$ as training progresses) dynamics of RNNs with $\tanh$ activations. This allows us to treat the continuous vector space as an approximation of a discrete state space. The following proposition formalizes the conditions under which cosine similarity serves as a reliable metric for distinguishing these discrete states.

\vspace{7pt}
\begin{proposition}[Identifiability via $\epsilon$-Saturated RNNs]
\label{prop:saturation_proof}
Let $h_1, h_2 \in \mathbb{R}^h$ be two normalized hidden state vectors. Let $\tilde{h}_1, \tilde{h}_2 \in \{\pm 1\}^d$ be their corresponding saturated discrete versions. Assume the RNN operates in an $\epsilon$-saturated regime, such that $\|h_i - \tilde{h}_i\|_2 \leq \epsilon$ for $i \in \{1, 2\}$. 
Then, if the cosine similarity satisfies:
\begin{equation*}
    \cos(h_1, h_2) \geq 1 - \kappa \quad \text{with} \quad \sqrt{\kappa} < \sqrt{2} \left( \frac{1}{\sqrt{d}} - \epsilon \right),
\end{equation*}
The two vectors correspond to the same discrete automaton state (i.e., $\tilde{h}_1 = \tilde{h}_2$).
\end{proposition}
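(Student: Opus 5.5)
The plan is a direct metric argument: turn the cosine condition into a Euclidean distance bound, transfer it to the saturated vertices with the triangle inequality, and compare with the minimum separation between distinct hypercube vertices. First, the normalization must be made consistent, since the $h_i$ are unit vectors while elements of $\{\pm 1\}^d$ have norm $\sqrt d$. I will therefore read the statement with the hidden dimension equal to $d$, and with the saturated version $\tilde h_i$ taken as the normalized vertex $\frac{1}{\sqrt d}v_i$, where $v_i \in \{\pm 1\}^d$. This is the reading under which the constant $\frac{1}{\sqrt d}$ in the hypothesis is the right one. Identifying $\tilde h_1=\tilde h_2$ is then the same as identifying $v_1=v_2$.

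\emph{Step 1 (cosine to distance).} Since $\|h_1\|=\|h_2\|=1$, we have
\begin{equation*}
\|h_1-h_2\|_2^2 = 2-2\cos(h_1,h_2) \le 2\kappa ,
\end{equation*}
so $\|h_1-h_2\|_2 \le \sqrt{2\kappa}$.

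\emph{Step 2 (transfer to the vertices).} By the triangle inequality and the $\epsilon$-saturation assumption,
\begin{equation*}
\|\tilde h_1-\tilde h_2\|_2 \le \|\tilde h_1-h_1\|_2+\|h_1-h_2\|_2+\|h_2-\tilde h_2\|_2 \le 2\epsilon+\sqrt{2\kappa}.
\end{equation*}

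\emph{Step 3 (vertex separation).} Suppose, for contradiction, that $v_1\neq v_2$. Then the two vertices differ in at least one coordinate, by $2$ in absolute value. Hence $\|v_1-v_2\|_2\ge 2$, and so $\|\tilde h_1-\tilde h_2\|_2 \ge 2/\sqrt d$.

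\emph{Step 4 (contradiction).} Multiplying the hypothesis $\sqrt{\kappa}<\sqrt2\,(1/\sqrt d-\epsilon)$ by $\sqrt 2$ gives
\begin{equation*}
\sqrt{2\kappa}<\frac{2}{\sqrt d}-2\epsilon, \qquad\text{that is,}\qquad 2\epsilon+\sqrt{2\kappa}<\frac{2}{\sqrt d}.
\end{equation*}
This contradicts Steps 2 and 3 together, so $\tilde h_1=\tilde h_2$. The hypothesis also implicitly requires $\epsilon<1/\sqrt d$; otherwise no $\kappa\ge 0$ satisfies it and the statement is vacuous.

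The main obstacle is not the calculation but Step 0, fixing the normalization conventions, because the statement as written mixes unit-norm $h_i$ with unnormalized $\pm1$ vertices and writes $\mathbb{R}^h$ alongside $\{\pm1\}^d$. I would first try the normalized-vertex reading above, since it reproduces the constant $\frac{1}{\sqrt d}$ exactly. If one insists on unnormalized vertices instead, I would rescale $h_i$ by $\sqrt d$: the same argument then goes through with the separation $2$ and $\epsilon$ measured at the scale $\sqrt d$, and it yields an equivalent condition.
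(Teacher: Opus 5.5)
Your proof is correct and follows the paper's argument: convert the cosine bound to $\|h_1-h_2\|_2\le\sqrt{2\kappa}$, pass through the saturated vectors with the triangle inequality to get $\|\tilde h_1-\tilde h_2\|_2\le 2\epsilon+\sqrt{2\kappa}$, and compare with the minimum separation between distinct vertices. Your explicit normalization $\tilde h_i=v_i/\sqrt d$ is cleaner than the paper's write-up, which states a vertex separation of $2$ but then uses a threshold calibrated to $2/\sqrt d$ (harmless, since $2/\sqrt d\le 2$, but the paper never explains the $2/\sqrt d$).
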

\vspace{7pt}

\begin{proof}
By the triangle inequality, the distance between the discrete states is bounded by:
\begin{equation*}
\begin{aligned}
\|\tilde{h}_1 - \tilde{h}_2\|_2 &= \|\tilde{h}_1 - h_1 + h_1 - h_2 + h_2 - \tilde{h}_2\|_2 \\
&\leq \|\tilde{h}_1 - h_1\|_2 + \|h_1 - h_2\|_2 + \|h_2 - \tilde{h}_2\|_2 \\
&\leq 2\epsilon + \|h_1 - h_2\|_2.
\end{aligned}
\end{equation*}
Since saturated RNN states take discrete values in $\{\pm 1\}^d$, the minimum non-zero Euclidean distance between any two distinct vertices is $2$. Then, using the relationship between Euclidean distance and cosine similarity for normalized vectors: $\|h_1 - h_2\|_2 = \sqrt{2(1 - \cos(h_1, h_2))}$.
Given the condition $\cos(h_1, h_2) \geq 1 - \kappa$, we have $\|h_1 - h_2\|_2 \leq \sqrt{2\kappa}$.
Substituting this into the inequality:
\begin{equation*}
\|\tilde{h}_1 - \tilde{h}_2\|_2 \leq 2\epsilon + \sqrt{2\kappa}.
\end{equation*}
For $\tilde{h}_1$ and $\tilde{h}_2$ to be the same state, the error bound must be strictly less than the minimum distance between distinct discrete states. The condition $\sqrt{\kappa} < \sqrt{2} (1/\sqrt{d} - \epsilon)$ ensures that the vectors fall within the same decision boundary of the discrete state, implying $\tilde{h}_1 = \tilde{h}_2$.
\newline
\end{proof}

This result confirms that by enforcing saturation and using a strict cosine threshold, we can rigorously recover the underlying discrete phases from the continuous RNN embeddings.

\section{Experiments: Implementation Details}\label{app:experiments}
\subsection{Task Specifications}
We evaluate \ENAP\ across three domains, each with distinct robotic platforms and observation spaces. For \textit{Complex Manipulation}, we utilize the ManiSkill benchmark (Franka Emika Panda arm), providing the agent with RGB images and proprioceptive states. For \textit{Long-Horizon TAMP}, we adopt the CALVIN environment (Franka Panda arm), where observations include base and wrist camera feeds alongside language task descriptions. Finally, our \textit{Real-World Manipulation} experiments employ a Kinova Gen3 arm, accessing base/wrist camera streams and end-effector pose. We collected $\sim$400 and $\sim$25 trajectories for each task in the simulated and real-world experiments, respectively. Specific task definitions and success criteria are detailed below.

\subsubsection{\textbf{PegInsertionSide (Sim)}} 
This task requires picking up a bi-colored peg and inserting either the orange or white end into a horizontal box hole. The environment features significant variability: peg lengths are sampled from $U[0.15, 0.20]\text{m}$ and radii from $U[0.015, 0.025]\text{m}$, with initial poses randomized in position and rotation. Success is defined as inserting either end at least $0.03\text{m}$ into the hole (radius $r_{\text{peg}}+0.02\text{m}$).

\subsubsection{\textbf{DualStackCube (Sim)}} 
The agent must stack two colored cubes (Red/Green) in any valid vertical configuration. Cube positions are initialized uniformly within a $0.2\text{m} \times 0.4\text{m}$ region. A trial is considered successful only if the top cube aligns within half-width of the bottom cube, remains static, and is fully released by the gripper.

\subsubsection{\textbf{MultiGoalPushT (Sim)}} 
A multi-modal variant of the Push-T task where the agent must maneuver a T-shaped block to one of two potential target poses, with success defined either by reaching any valid target or the nearest one depending on the evaluation setting. The T-block's initial pose is fully randomized. Success is achieved when the intersection-over-union (IoU) between the block and the designated target silhouette exceeds $0.90$.

\subsubsection{\textbf{Sequential (Sim)}} 
The task enforces a strict causal chain: (1) rotate the red block right ($>60^\circ$ yaw); (2) push the blue block left ($x$-displacement $<-0.1$m); (3) move the slider right (joint $\Delta < -0.15$); (4) toggle the LED switch; and (5) turn on the light bulb.

\subsubsection{\textbf{Hierarchical (Sim)}} 
The task combines articulated object manipulation and imposes temporal logic (e.g., a drawer must be opened before placing the block inside it): (1) open the drawer (joint $\Delta \ge 0.12$); (2) lift the red block from the table ($z > 0.05$m); (3) place it inside the drawer; (4) lift the pink block from the slider ($z > 0.03$m); and (5) stack it onto another object (static stability required).

\subsubsection{\textbf{Hanger (Real)}} 
The task requires unhooking a clothes hanger from one end of a rack and transferring it to the opposite side. An obstacle in the middle prevents direct lateral movement, forcing a complex lift-over trajectory. Success is defined as the hanger being stably hooked on the target side.

\subsubsection{\textbf{MultiPickPlace (Real)}} 
It involves sorting three colored cans (red, green, blue) into corresponding bowls (red, green, blue) fixed on the table. The cans are initialized randomly on the same side, requiring the agent to identify and match object-container pairs.

\subsubsection{\textbf{StackLego (Real)}} 
The task demands high-precision assembly without force feedback. A blue Lego brick is initialized at discrete locations on a green base plate and must be stacked onto a fixed red brick. We adopt a graded success metric: 1.0 for a full insertion (studs interlocked) and 0.5 for a stable stack (brick resting on top) without interlocking.
\begin{figure*}[t]
    \centering
    \includegraphics[width=\linewidth]{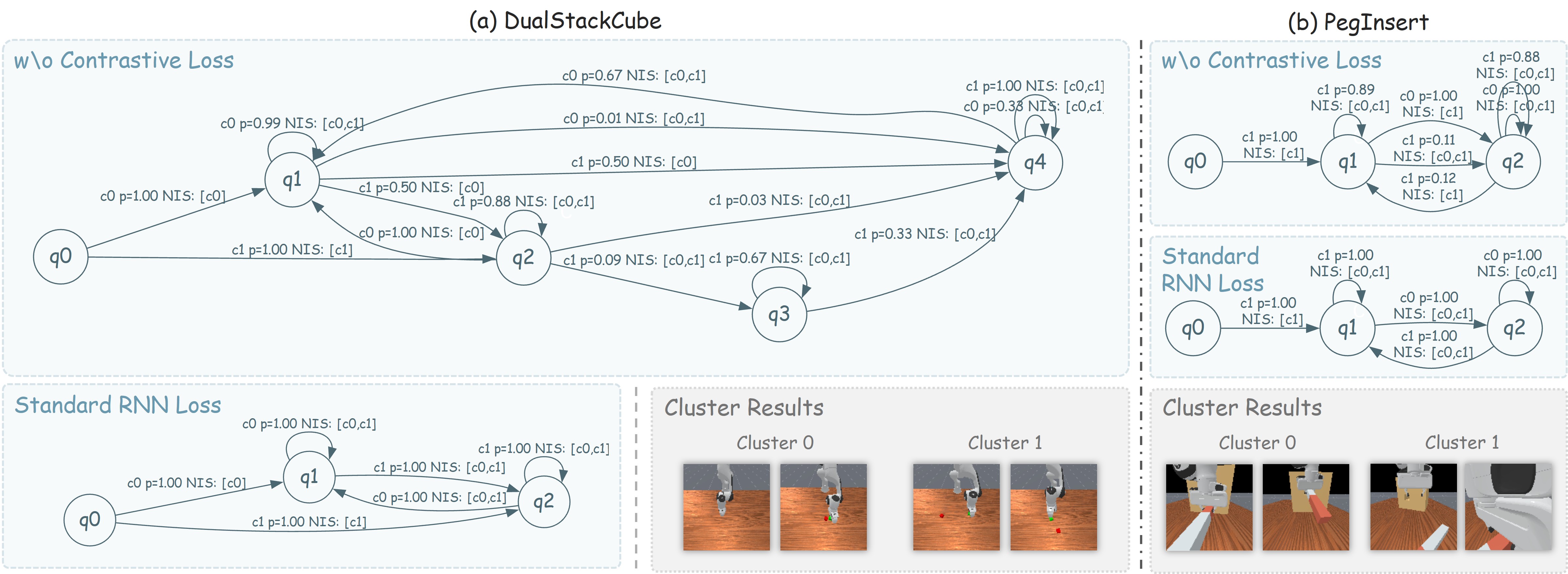}
    \vspace{-15pt}
    \caption{\textbf{Impact of Phase-Aware Contrastive Loss.} Without phase-aware contrastive loss, the automaton is fragmented with ambiguous transitions.}
    \label{fig:app_analysis}
\end{figure*}
\begin{figure}[t]
    \centering
    \includegraphics[width=\linewidth]{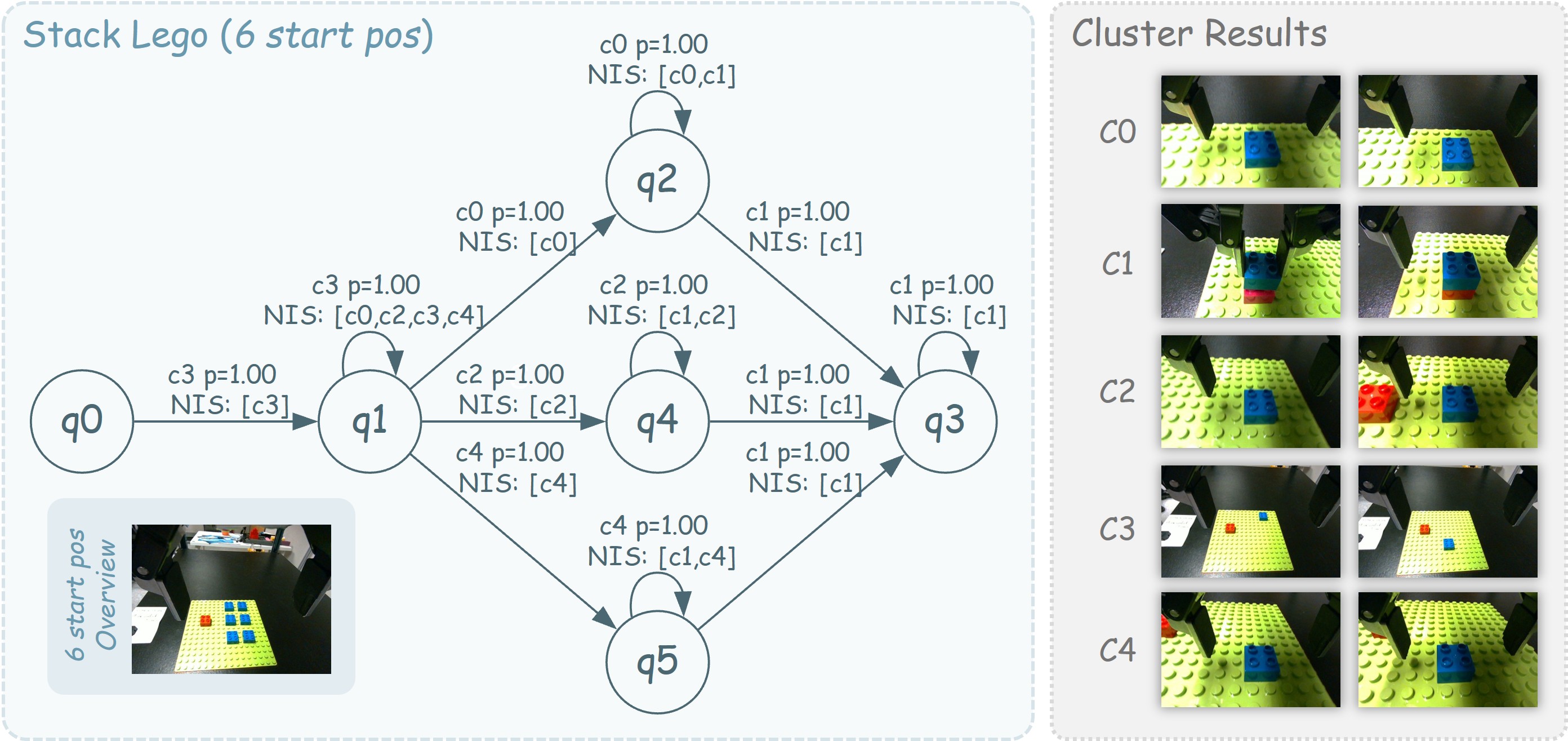}
    \caption{\textbf{Generalization of Structural Branching.} Instead of overfitting to six distinct branches, the model discovers three logical pathways ($q_2, q_4, q_5$), grouping visually and kinetically similar poses (e.g., along the line of sight) into shared phases.}
    \vspace{-5pt}
    \label{fig:6pos}
\end{figure}

\begin{table}[h]
\centering
\caption{{Network Architectures for Complex Manipulation.}}
\vspace{-5pt}
\label{tab:complex_arch}
\resizebox{\linewidth}{!}{%
\begin{tabular}{lcc}
\hline
{Component}\rule{0pt}{7pt} & \textbf{\ENAP\ (Oracle)} & {\textcolor{harpcolor}{\textbf{ENAP$^*$}} (DINO)} \\
\hline
\multicolumn{3}{l}{\cellcolor{gray!10}\textit{{- Encoder Architecture}}\rule{0pt}{7pt}} \\
Visual Backbone & NatureCNN (3-layer Conv) & DINOv2 ViT-S/14 (Frozen) \\
Visual Head & Linear $\to$ 256 & Spatial Softmax $\to$ MLP $\to$ 256 \\
Proprioception & Linear $\to$ 256 & Fourier (7-freq) $\to$ SpecNorm $\to$ 128 \\
\multicolumn{3}{l}{\cellcolor{gray!10}\textit{{- Feature Dimensions (Vis + Pos [+ Delta\_Pos])}}\rule{0pt}{7pt}} \\
\texttt{StackCube} & 512 (256+256) & 384 (256+128) \\
\texttt{Peg} / \texttt{PushT} & 512 (256+256+0) & 512 (256+128+128) \\
\multicolumn{3}{l}{\cellcolor{gray!10}\textit{{- Residual Policy Network (Hidden Units)}}\rule{0pt}{7pt}} \\
\texttt{StackCube} & 512 & 512 \\
\texttt{Peg} / \texttt{PushT} & 256 & 256 \\
\multicolumn{3}{l}{\cellcolor{gray!10}\textit{{- PMM Structure Discovery}}\rule{0pt}{7pt}} \\
RNN Encoder & \multicolumn{2}{c}{Hidden Dim 64} \\
State Embed & \multicolumn{2}{c}{Learnable Embedding (Dim 16)} \\
\hline
\end{tabular}%
}
\end{table}
\begin{table}[h]
\centering
\caption{{Network Architectures for Long-Horizon TAMP.}}
\vspace{-5pt}
\label{tab:tamp_arch}
\resizebox{\linewidth}{!}{%
\begin{tabular}{lc}
\hline
\textbf{Component}\rule{0pt}{7pt} & {\ENAP\ (FLOWER)} \\
\hline
\multicolumn{2}{l}{\cellcolor{gray!10}\textit{{- Encoder Architecture}}\rule{0pt}{7pt}} \\
Visual Backbone & Florence-2 DaVit (Frozen) \\
Language Encoder & Florence-2 Text Encoder \\
Fusion & Concat $\to$ Florence-2 Encoder $\to$ Mean Pool \\
Feature Dim & 1024 \\
\multicolumn{2}{l}{\cellcolor{gray!10}\textit{{- Residual Policy Network}}\rule{0pt}{7pt}} \\
Architecture & Linear (2118 $\to$ 256) $\to$ ReLU $\to$ Linear (256 $\to$ 70) \\
Hidden Units & 256 \\
Output & 70-dim Action Sequence ($10 \times 7$) \\
\multicolumn{2}{l}{\cellcolor{gray!10}\textit{{- PMM Structure Discovery}}\rule{0pt}{7pt}} \\
RNN Encoder & Hidden Dim 64 \\
State Embed & Learnable Embedding (Dim 16) \\
Input Dim & Action (70) + State Embed (16) \\
\hline
\end{tabular}%
}
\end{table}
\begin{table}[h]
\centering
\caption{{Network Architectures for Real-World Manipulation.}}
\vspace{-5pt}
\label{tab:real_arch}
\resizebox{\linewidth}{!}{%
\begin{tabular}{lc}
\hline
\textbf{Component}\rule{0pt}{7pt} & {\textcolor{harpcolor}{\textbf{ENAP$^*$}} (DINO)} \\
\hline
\multicolumn{2}{l}{\cellcolor{gray!10}\textit{{- Encoder Architecture}}\rule{0pt}{7pt}} \\
Visual Backbone & DINOv2 ViT-S/14 (Frozen) \\
Visual Head & Spatial Softmax $\to$ MLP (768$\to$128) \\
Proprioception & Fourier (7-freq) $\to$ SpecNorm $\to$ 128 \\
Fusion & Base (128) + Wrist (128) + Prop (128) = 384 \\
\multicolumn{2}{l}{\cellcolor{gray!10}\textit{{- Policy Network}}\rule{0pt}{7pt}} \\
Residual MLP & Linear (771 $\to$ 256) $\to$ ReLU $\to$ Output (3) \\
Gripper Head & MLP (Feat + Center $\to$ 128 $\to$ 2) \\
Controller & Cartesian PID ($K_p$=5.0, $K_d$=0.4) \\
\multicolumn{2}{l}{\cellcolor{gray!10}\textit{{- PMM Structure Discovery}}\rule{0pt}{7pt}} \\
RNN Encoder & Hidden Dim 64 \\
State Embed & Learnable Embedding \\
\hline
\end{tabular}%
}
\end{table}
\vspace{-10pt}

\subsection{Network Architecture}
To ensure reproducibility, we detail the specific architectures for the encoders, RNNs, and residual networks, along with the hyperparameters used for training and structure discovery. We provide detailed network specifications for the complex manipulation tasks, the long-horizon TAMP tasks, and the real-world task in \cref{tab:complex_arch}, \cref{tab:tamp_arch} and \cref{tab:real_arch}, respectively. 

\section{Experiments: Further Analysis}
\subsection{Impact of Contrastive Loss on PMM}
\label{sec_app:ic}
To further validate our design, we visualize the impact of the phase-aware contrastive loss $\mathcal{L}_{\text{contrast}}$ on the extracted PMM structure in \cref{fig:app_analysis}. We apply K-means clustering with the number of effective clusters inferred by HDBSCAN to filter out noise points. When training the RNN without this loss (Top), the resulting automaton is fragmented and highly stochastic, with numerous spurious transitions. This indicates that the latent states fail to effectively summarize history. This confirms that explicitly shaping the latent geometry is crucial for discovering interpretable and control-relevant structures.

\subsection{Adaptive Abstraction in Complex Multi-Modal Tasks}
\label{app-sec:aa}
We further investigate the scaling of structure discovery by extending the StackLego task to six initial positions. We assign points from the noise cluster obtained by HDBSCAN to the nearest valid cluster. As shown in \cref{fig:6pos}, the extracted PMM does not naively spawn six separate branches; instead, it adaptively groups kinetically similar poses (e.g., those aligned along the camera's optical axis) into three shared latent modes. This demonstrates that \ENAP\ is not merely memorizing task instances but is optimizing for a compact policy representation that minimizes the residual learning burden, effectively discovering the ``minimal sufficient structure" for control.

\end{document}